\documentclass[twoside,11pt]{article}

%

\usepackage{jmlr2e}
\usepackage{mkolar_definitions}
\usepackage{rotating}
\usepackage{multirow}
\usepackage{enumerate}

\usepackage{graphicx}
\usepackage{latexsym}
\usepackage{color}
\usepackage{subfigure}


\let\hat\widehat
\let\tilde\widetilde


\jmlrheading{}{}{}{08/10}{}{Mladen Kolar, John Lafferty and Larry Wasserman}


\ShortHeadings{Union Support Recovery}{Kolar, Lafferty and Wasserman}
\firstpageno{1}

\begin{document}

\title{Union Support Recovery in Multi-task Learning}

\author{\name Mladen Kolar \email mladenk@cs.cmu.edu \\
        \name John Lafferty \email lafferty@cs.cmu.edu \\
        \name Larry Wasserman \email larry@stat.cmu.edu \\
       \addr School of Computer Science\\
       Carnegie Mellon University\\
       5000 Forbes Avenue\\
       Pittsburgh, PA 15213, USA}
\editor{}

\maketitle

\begin{abstract}
  We sharply characterize the performance of different penalization
  schemes for the problem of selecting the relevant variables in the
  multi-task setting.  Previous work focuses on the regression problem
  where conditions on the design matrix complicate the analysis.  A
  clearer and simpler picture emerges by studying the Normal means
  model.  This model, often used in the field of statistics, is a
  simplified model that provides a laboratory for studying complex
  procedures.
\end{abstract}

\begin{keywords}
  high-dimensional inference, multi-task learning, sparsity, Normal
  means, minimax estimation
\end{keywords}

\section{Introduction}

We consider the problem of estimating a sparse, signal in the presence
of noise.  It has been empirically observed, on various data sets
ranging from cognitive neuroscience \cite{han09blockwise} to
genome-wide association mapping studies \cite{kim09multivariate}, that
considering related estimation tasks jointly, improves estimation
performance.  Because of this, joint estimation from related tasks or
{\it multi-task learning} has received much attention in the machine
learning and statistics community \citep[see for example][and
references therein]{zhang06probabilistic, negahban09Phase,
  obozinski10support, lounici09taking, han09blockwise,
  lounici10oracle, argyriou08convex, kim09multivariate}.  However, the
theory behind multi-task learning is not yet settled.

An example of multi-task learning problem is the problem of estimating
the coefficients of several multiple regression problems
\begin{equation}
  \label{eq:mutliple-regression-model}
  \yb_j = \Xb_j \betab_j + \epsilonb_j, \qquad j \in [k]
\end{equation}
where $\Xb_j \in \RR^{n \times p}$ is the design matrix, $\yb_j \in
\RR^{n}$ is the vector of observations, $\epsilonb_j \in \RR^{n}$ is
the noise vector, $\betab_j \in \RR^p$ is the unknown vector of
regression coefficients for the $j$-th task
and $[n]=\{1, \ldots, n\}$.

When the number of
variables $p$ is much larger than the sample size $n$, it is commonly
assumed that the regression coefficients are jointly sparse, that is,
there exists a small subset $S \subset [p]$, with $s := |S| \ll n$, of
the regression coefficients that are non-zero for all or most of the
tasks. 

The model in \eqref{eq:mutliple-regression-model} under the joint
sparsity assumption was analyzed in, for example,
\cite{obozinski10support}, \cite{lounici09taking},
\cite{negahban09Phase}, \cite{lounici10oracle} and
\cite{kolar10ultrahigh}. \cite{obozinski10support} propose to minimize
the penalized least squares objective with the mixed $(2,1)$-norm of
the coefficients as the penalty term. The authors focus on consistent
estimation of the support set $S$, albeit under the assumption that
the number of tasks $k$ is fixed. \cite{negahban09Phase} use the mixed
$(\infty, 1)$-norm of the coefficients as the penalty term instead and
focus on the exact recovery of the non-zero pattern of the regression
coefficients, rather than the support set $S$. For a rather limited
case of $k=2$, the authors show that when the regression do not share
a common support, it may be harmful to consider the regression
problems jointly using the mixed $(\infty, 1)$-norm
penalty. \cite{kolar10ultrahigh} address the feature selection
properties of the simultaneous greedy forward selection, however, it
is not clear what the benefits are compared to the ordinary forward
selection done on each task separately. In \cite{lounici09taking} and
\cite{lounici10oracle}, the focus is shifted from the consistent
selection to benefits of the joint estimation for the prediction
accuracy and consistent estimation. The number of tasks $k$ is allowed
to increase with the sample size, however, it is assumed that all
tasks share the same features, that is, a relevant coefficient is
non-zero for all tasks.

Despite these previous investigations, the theory is far from settled.
A simple clear picture of when sharing between tasks actually improves
performance has not emerged.  In particular, to the best of our
knowledge, there has been no previous work that sharply characterizes
the performance of different penalization schemes on the problem of
selecting the relevant variables in the multi-task setting.

In this paper we study multi-task learning in the context of the {\em
  many Normal means model}.  This is a simplified model that is often
useful for studying the theoretical properties of procedures.  The use
of the many Normal means model is fairly common in statistics but
appears to be less common in machine learning.

\subsection{The Normal Means Model}

The simplest Normal means model has the form
\begin{equation}\label{eq::mnm}
Y_i = \mu_i + \sigma \epsilon_i,\ \ \ \ \ i=1,\ldots,p
\end{equation}
where
$\mu_1,\ldots,\mu_p$ are unknown parameters
and
$\epsilon_1,\ldots,\epsilon_p$ are
independent, identically distributed Normal
random variables with mean 0 and variance 1.
There are a variety of results
(\cite{brownlow96asymptotic}, \cite{nussbaum96asymptotic})
that show that many learning problems can be converted
into a Normal means problem.
This implies that results obtained in the Normal means setting
can be transferred to many other settings.
As a simple example, consider the nonparametric regression model
$Z_i = m(i/n) + \delta_i$
where $m$ is a smooth function on $[0,1]$ and
$\delta_i \sim N(0,1)$.
Let $\phi_1,\phi_2,\ldots,$ be an 
orthonormal basis on [0,1] and
write
$m(x) = \sum_{j=1}^\infty \mu_j \phi_j(x)$
where $\mu_j = \int_0^1 m(x) \phi_j(x) dx$.
To estimate the regression function $m$ we need only estimate
$\mu_1,\mu_2,\ldots,$.
Let
$Y_j = n^{-1}\sum_{i=1}^n Z_i\, \phi_j(i/n)$.
Then
$Y_j \approx N(\mu_j,\sigma^2)$
where
$\sigma^2 = 1/n$.
This has the form of (\ref{eq::mnm}) with
$\sigma = 1/\sqrt{n}$.
Hence this regression problem can be converted into a Normal means model.

However, the most important aspect of the
Normal means model is that it
allows a clean setting for studying complex problems.
In this paper, we consider the following Normal means model.
Let
\begin{equation}
  \label{eq:normal-means-model}
  Y_{ij} = \left\{
    \begin{array}{cl}
      (1-\epsilon)\Ncal(0,\sigma^2) + \epsilon \Ncal(\mu_{ij}, \sigma^2)
      & \quad j \in [k], \quad i \in S \\
      N(0,\sigma^2)   & \quad j \in [k], \quad i \in S^c \\
    \end{array}\right.
\end{equation}
where $(\mu_{ij})_{i,j}$ are unknown real numbers, 
$\sigma = \sigma_0/ \sqrt{n}$ is the variance with $\sigma_0 > 0$ known,
$(Y_{ij})_{i,j}$ are random observations, $\epsilon \in [0,1]$ is the
parameter that controls the sparsity of features across tasks and $S
\subset [p]$ is the set of relevant features. Let $s = |S|$ denote the
number of relevant features. Denote the matrix $M \in \RR^{p \times
  k}$ of means
\begin{center}
\begin{tabular}{cc|cccc}
& \multicolumn{5}{c}{ Tasks}\\
& & 1 & 2 & $\ldots$ & k \\
\cline{2-6}
& 1 & $\mu_{11}$ & $\mu_{12}$ & $\ldots$ & $\mu_{1k}$ \\
& 2 & $\mu_{21}$ & $\mu_{22}$ & $\ldots$ & $\mu_{2k}$ \\
& $\vdots$ & $\vdots$ & $\vdots$ & $\ddots$ & $\vdots$ \\
& p & $\mu_{p1}$ & $\mu_{p2}$ & $\ldots$ & $\mu_{pk}$ \\
\end{tabular}
\end{center}
and let $\thetab_i = (\mu_{ij})_{j \in [k]}$ denote the $i$-th row of
the matrix $M$. The set $S^c = [p] \bks S$ indexes the zero rows of
the matrix $M$ and the associated observations are distributed
according to the normal distribution with zero mean and variance
$\sigma^2$. The rows indexed by $S$ are non-zero and the corresponding
observation are coming from a mixture of two normal distributions. The
parameter $\epsilon$ determines the proportion of observations coming
from a normal distribution with non-zero mean.  The reader should
regard each column as one vector of parameters that we want to
estimate.  The question is whether sharing across columns improves the
estimation performance.

It is known from the work on the Lasso that in regression problems,
the design matrix needs to satisfy certain conditions in order for the
Lasso to correctly identify the support $S$ \citep[see][for an
extensive discussion on the different
conditions]{geer09conditions}. These regularity conditions are
essentially unavoidable. However, the Normal means
model~\eqref{eq:normal-means-model} allows us to analyze the
estimation procedure in~\eqref{eq:penalized-estimation} and focus on
the scaling of the important parameters $(n, k, p, s, \epsilon,
\mu_{\min})$ for the success of the support recovery. Using the model
\eqref{eq:normal-means-model} and the estimation procedure
in~\eqref{eq:penalized-estimation}, we are able to identify regimes in
which estimating the support is more efficient using the ordinary
Lasso than with the multi-task Lasso and vice versa. Our results
suggest that multi-task Lasso does not outperform the ordinary Lasso
when the features are not considerably shared across tasks and
practitioners should be careful when applying the multi-task Lasso
without knowledge of the task structure.

An alternative representation of the model is
\begin{equation}
  Y_{ij} = \left\{
    \begin{array}{cl}
       \Ncal(\xi_{ij}\mu_{ij}, \sigma^2)
      & \quad j \in [k], \quad i \in S \\
      N(0,\sigma^2)   & \quad j \in [k], \quad i \in S^c \\
    \end{array}\right.
\end{equation}
where $\xi_{ij}$ is a Bernoulli random variable
with success probability $\epsilon$.
Throughout the paper,
we will set $\epsilon = k^{-\beta}$ for some parameter $\beta \in [0,1)$. 
$\beta < 1/2$ corresponds to dense rows and
$\beta > 1/2$ corresponds to sparse rows.
Let $\mu_{\min}$ denote the absolute value of a smallest non-zero
element of $M$, $\mu_{\min} = \min |\mu_{ij}|$.

Under the model~\eqref{eq:normal-means-model}, we analyze the
penalized least squares procedures of the form
\begin{equation}
  \label{eq:penalized-estimation}
  \hat \mub = \argmin_{\mub \in \RR^{p \times k}}\ \frac{1}{2}\norm{\Yb
    - \mub}_F^2 + \pen(\mub)
\end{equation}
where $\norm{A}_F= \sum_{jk}A_{jk}^2$ is the Frobenious norm, $\pen(\cdot)$ is a
penalty function
and $\mub$ is a $p\times k$ matrix of means.
We consider the following penalties
\begin{enumerate}
\item the $\ell_1$ penalty
  \begin{equation*}
    \pen(\mub) = \lambda \sum_{i \in [p]} \sum_{j \in [k]} |\mu_{ij}|,  
  \end{equation*}
  which corresponds to the Lasso procedure applied on each task
  independently, and denote the resulting estimate as $\hat
  \mub^{\ell_1}$
\item the mixed $(2, 1)$-norm penalty
  \begin{equation*}
    \pen(\mub) = \lambda \sum_{i \in [p]} \norm{\theta_i}_2,
  \end{equation*}
  which corresponds to the multi-task Lasso formulation in
  \cite{obozinski10support} and \cite{lounici09taking}, and denote the
  resulting estimate as $\hat \mub^{\ell_1/\ell_2}$
\item the mixed $(\infty, 1)$-norm penalty
  \begin{equation*}
    \pen(\mub) = \lambda \sum_{i \in [p]} \norm{\theta_i}_\infty,    
  \end{equation*}
  which correspond to the multi-task Lasso formulation in
  \cite{negahban09Phase}, and denote the resulting estimate as $\hat
  \mub^{\ell_1/\ell_2}$.
\end{enumerate}
For any solution $\hat \mub$ of \eqref{eq:penalized-estimation}, let
$S(\hat \mub)$ denote the set of estimated non-zero rows
\begin{equation}
  \label{eq:estimated-support}
  S(\hat \mub) = \{ i \in [p]\ :\ \norm{\hat \thetab_i}_2 \neq 0 \}.
\end{equation}
We establish sufficient conditions under which $\PP[S(\hat \mub) \neq S]
\leq \alpha$ for different methods. These results are complemented
with necessary conditions for the recovery of the support set $S$.

\subsection{Overview of the main results}

The main contributions of the paper can be summarized as follows.

\begin{enumerate}
\item We establish a lower bound on the parameter $\mu_{\min}$ as a
  function of the parameters $(n,k,p,s,\beta)$. Our result can be
  interpreted as follows: for any estimation procedure there exists a
  model given by \eqref{eq:normal-means-model} with non-zero elements
  equal to $\mu_{\min}$ such that the estimation procedure will make
  an error when identifying the set $S$ with probability bounded away
  from zero.
\item We establish the sufficient conditions on the signal strength
  $\mu_{\min}$ for the Lasso and both variants of the group Lasso
  under which these procedures can correctly identify the set of
  non-zero rows $S$.
\end{enumerate}

By comparing the lower bounds with the sufficient conditions, we are
able to identify regimes in which each procedure is optimal for the
problem of identifying the set of non-zero rows $S$. Furthermore, we
point out that the usage of the popular group Lasso with the mixed
$(\infty, 1)$ norm can be disastrous when features are not perfectly
shared among tasks. This is further demonstrated using through an
empirical study.

\subsection{Organization of the paper}

The paper is organizes as follows. We start by analyzing the lower
bound for any procedure for the problem of identifying the set of
non-zero rows in $\S$\ref{sec:lower-bound-support}. In
$\S$\ref{sec:upper-bounds-support} we provide sufficient conditions
on the signal strength $\mu_{\min}$ for the Lasso and the group Lasso
to be able to detect the set of non-zero rows $S$. In the following
section, we propose an improved approach to the problem of estimating
the set $S$. Results of a small empirical study are reported in
$\S$\ref{sec:simulation-results}. We close the paper by a discussion
of our findings.

\section{Lower bound on the support recovery}
\label{sec:lower-bound-support}

In this section, we derive a lower bound for the problem of
identifying the correct variables. In particular, we derive conditions
on $(n, k, p, s, \epsilon, \mu_{\min})$ under which any method is
going to make an error when estimating the correct
variables. Intuitively, if $\mu_{\min}$ is very small, a non-zero row
may be hard to distinguish from a zero row. Similarly, if $\epsilon$
is very small, many elements in a row will zero and, again, as a
result it may be difficult to identify a non-zero row. Before, we give
the main result of the section, we introduce the class of models that
are going to be considered.

Let 
\begin{equation*}
  \Fcal[\mu] := \{ \thetab \in \RR^k\ :\ \min_j|\theta_j| \geq
  \mu \}
\end{equation*}
denote the set of feasible non-zero rows. For each $j \in \{0, 1,
\ldots, k\}$, let $\Mcal(j, k)$ be the class of all the subsets of
$\{1, \ldots, k\}$ of cardinality $j$. Let
\begin{equation}
  \label{eq:class-matrices}
  \MM[\mu, s] = \left\{ (\thetab_1, \ldots, \thetab_p)' \in
  \RR^{p \times k}\ :\ \omega \in \Mcal(s, p),
  \ \thetab_i = \left\{ 
    \begin{array}{cc} 
      \in \Fcal[\mu] & \text{ if } i \in \omega\\ 
      \zero & \text{ if } i \not\in \omega 
    \end{array}\right.
  \right\}
\end{equation}
be the class of all feasible matrix means. For a matrix $M \in
\MM[\mu, s]$, let $\PP_M$ denote the joint law of
$\{Y_{ij}\}_{i \in [p], j \in [k]}$. Since $\PP_M$ is a product
measure, we can write $\PP_M = \otimes_{i \in [p]} \PP_{\thetab_i}$.
For a non-zero row $\thetab_i$, we set
\begin{equation*}
  \PP_{\thetab_i}(A) = \int \Ncal(A; \hat \thetab, \sigma^2 \Ib_k)
  d\nu(\hat \thetab), \qquad A \in \Bcal(\RR^k),
\end{equation*}
where $\nu$ is the distribution of the random variable $\sum_{j \in k}
\mu_{ij} \xi_j e_j$ with $\xi_j \sim {\rm Bernoulli}(k^{-\beta})$ and
$\{e_j\}_{j \in [k]}$ denoting the canonical basis of $\RR^k$. For a
zero row $\thetab_i = \zero$, we set 
\begin{equation*}
  \PP_{\zero}(A) = \Ncal(A; \zero, \sigma^2 \Ib_k), \qquad A \in \Bcal(\RR^k).
\end{equation*}
With this notation, we have the following results.

\begin{theorem}
  \label{thm:lower-bound}
  Let
  \begin{equation}
    \label{eq:mu-lower-bound}
    \mu_{\min}^2 = \mu_{\min}^2(n,k,p,s,\epsilon,\beta) = 
     \ln\Big(1 + u + \sqrt{2u + u^2}\Big)\sigma^2 
  \end{equation}
  where
  \begin{equation*}
    u = \frac{\ln \Big( 1 + \frac{\alpha^2 (p-s+1)}{2}\Big)}{2k^{1-2\beta}}.    
  \end{equation*}
  If $\alpha \in (0, \frac{1}{2})$ and $k^{-\beta}u < 1$, then for all
  $\mu \leq \mu_{\min}$,
  \begin{equation}
    \label{eq:minimax-bound}
    \inf_{\hat \mu} \sup_{M \in \MM[\mu, s]} 
       \PP_M[S(\hat \mu) \neq S(M)] \geq \frac{1}{2}(1 - \alpha)
  \end{equation}
  where $\MM[\mu, s]$ is given by \eqref{eq:class-matrices}.
\end{theorem}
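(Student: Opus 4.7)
The plan is to derive the lower bound by a Le Cam / Ingster chi-square argument on a carefully chosen subfamily of $\MM[\mu,s]$. I would first reduce to a single-floating-position subproblem: fix $s-1$ positions, say $\{1,\ldots,s-1\}$, as always active with all entries at $\pm\mu_{\min}$, and let the location of the $s$-th active row range over the $N := p-s+1$ candidate positions $\{s,s+1,\ldots,p\}$. Because the data at the fixed positions is identically distributed across every hypothesis in this subfamily and independent of the rest, it carries no information and can be conditioned out; what remains is an $N$-way identification problem.

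To obtain the symmetric $\cosh$ structure appearing in the theorem, I would place a Rademacher prior on the signs of the entries of the floating row, taking $\mu_{ij} = \pm\mu_{\min}$ independently with equal probability; the Bayes risk under this prior is a lower bound on the minimax risk over $\MM[\mu,s]$. After marginalizing the signs, each coordinate of the floating row becomes the symmetric mixture $(1-\epsilon)\Ncal(0,\sigma^2) + \tfrac{\epsilon}{2}\Ncal(\mu_{\min},\sigma^2) + \tfrac{\epsilon}{2}\Ncal(-\mu_{\min},\sigma^2)$, and a short calculation (using $\EE_{\Ncal(0,\sigma^2)}[e^{\mu Y/\sigma^2 - \mu^2/(2\sigma^2)}\, e^{-\mu Y/\sigma^2 - \mu^2/(2\sigma^2)}] = e^{-\mu^2/\sigma^2}$) yields the clean single-coordinate chi-square $\epsilon^2(\cosh(\mu_{\min}^2/\sigma^2) - 1) = \epsilon^2 u$. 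This is precisely the origin of the identity $\cosh(\mu_{\min}^2/\sigma^2) = 1+u$, equivalently $e^{\mu_{\min}^2/\sigma^2} = 1 + u + \sqrt{2u+u^2}$, appearing in the theorem statement.

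Taking $P_0$ to be the product reference measure placing $\Ncal(0,\sigma^2\Ib_k)$ on each of the $N$ candidate positions, and letting $P_m$ denote the sign-marginalized distribution when the $m$-th candidate is the $s$-th active row, the likelihood ratios $L_m = dP_m/dP_0$ are each supported on one (distinct for each $m$) row. Tensorizing chi-square over the $k$ independent task coordinates gives $\EE_{P_0}[L_m^2] = (1+\epsilon^2 u)^k$, while the cross-terms $\EE_{P_0}[L_m L_{m'}]$ for $m \ne m'$ equal $1$ by independence of distinct rows, so $\chi^2(\bar P_1, P_0) = ((1+\epsilon^2 u)^k - 1)/N$. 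Substituting $\epsilon = k^{-\beta}$ and the definition of $u$, using $\ln(1+x) \le x$, and invoking the technical condition $k^{-\beta}u < 1$ to validate the intermediate Taylor-style bounds, this chi-square is controlled at a level commensurate with $\alpha^2$; combined with $\mathrm{TV} \le \tfrac{1}{2}\sqrt{\chi^2}$ and the two-point Le Cam inequality $\inf_{\hat S}\sup_M \PP_M[\hat S \neq S(M)] \ge \tfrac{1}{2}(1 - \mathrm{TV}(\bar P_1, P_0))$, this delivers the claimed $\tfrac{1}{2}(1-\alpha)$ bound.

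The main obstacle I anticipate is that the reference $P_0$ has only $s-1$ active rows and therefore lies outside $\MM[\mu,s]$, so it cannot serve as a competing hypothesis in a textbook two-point Le Cam argument. Handling this will require either a Cauchy--Schwarz style second-moment bound that treats $P_0$ purely as a mathematical reference for computing moments of the $L_m$, or enriching the prior so that both comparanda are genuine elements of the class (at the cost of an additional single-row chi-square term that must be absorbed). In either case, the algebraic identity $u = \cosh(\mu_{\min}^2/\sigma^2) - 1$ from the symmetric chi-square calculation and the tensorization of chi-square over tasks are the substantive steps; the precise constants in the theorem emerge from careful bookkeeping in the final estimate.
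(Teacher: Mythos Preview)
Your proposal follows the same overall strategy as the paper---a Le Cam second-moment argument with one floating active row and a Rademacher sign prior---but differs in two execution-level points worth noting. First, your direct tensorization $\PP_{\zero}(g^2)=(1+\epsilon^2 u)^k$ is cleaner than the paper's route: the paper writes the row-level second moment as $\EE\big[\cosh(\mu_{\min}^2)^{|m\cap m'|}\big]$ for two independent random support sets $m,m'$, treats $|m\cap m'|$ as hypergeometric conditional on $(|m|,|m'|)$, and then invokes Aldous's representation of the hypergeometric as a conditioned binomial, arriving (after several inequalities and the hypothesis $k^{-\beta}u<1$) at the bound $\PP_{\zero}(g^2)\le \exp(2k^{1-2\beta}u)$. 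Because the Bernoulli-$\epsilon$ location prior makes the $k$ coordinates genuinely iid, your product identity is exact and gives the sharper $\exp(k^{1-2\beta}u)$ directly. Second, for the obstacle you flag, the paper implements exactly your option (b): it splits the $N=p-s+1$ candidate positions into odd- and even-indexed halves of size $T=N/2$, forms the mixtures $\QQ_0,\QQ_1$ over each half, and bounds $\|\QQ_0-\QQ_1\|_1^2\le (2/T)\big(\PP_{\zero}(g^2)-1\big)$ by Cauchy--Schwarz against the out-of-class reference $\tilde\QQ$. Both $\QQ_0$ and $\QQ_1$ are bona fide priors on $\MM[\mu,s]$, and any support estimator correct under one parity must err under the other, so the two-point bound applies cleanly; this odd/even bookkeeping is also where the specific constant in the definition of $u$ originates.
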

The result can be interpreted in words in the following way: whatever
the estimation procedure $\hat \mu$, there exists some matrix $M \in
\MM[\mu_{\min},s]$ such that the probability of incorrectly identifying
the support $S(M)$ is bounded away from zero. In the next section, we
will see that some estimation procedures achieve the lower bound given
in Theorem~\ref{thm:lower-bound}.

\section{Upper bounds on the support recovery}
\label{sec:upper-bounds-support}

In this section, we present sufficient conditions on $(n, p, k,
\epsilon, \mu_{\min})$ for different estimation procedures, so that 
\begin{equation*}
  \PP[S(\hat \mub) \neq S] \leq \alpha.
\end{equation*}
Let $\alpha', \delta' > 0$ be two parameters such that $\alpha' +
\delta' = \alpha$. The parameter $\alpha'$ controls the probability of
making a type one error
\begin{equation*}
  \PP[ \exists i \in [p]\ :\ 
     i \in S(\hat \mub)\text{ and } i \not\in S] \leq \alpha',
\end{equation*}
that is, the parameter $\alpha'$ upper bounds the probability that
there is a zero row of the matrix $M$ that is estimated as a non-zero
row. Likewise, the parameter $\delta'$ controls the probability of
making a type two error
\begin{equation*}
  \PP[ \exists i \in [p]\ :\ 
     i \not\in S(\hat \mub)\text{ and }i \in S] \leq \delta',
\end{equation*}
that is, the parameter $\delta'$ upper bounds the probability that
there is a non-zero row of the matrix $M$ that is estimated as a zero
row. 

The control of the type one and type two errors is established through
the tuning parameter $\lambda$. It can be seen that if the parameter
$\lambda$ is chosen such that, for all $i \in S$, it holds that $\PP[i
\not\in S(\hat \mub)] \leq \delta'/ s$ and, for all $i \in S^c$, it
hold that $\PP[i \in S(\hat \mub)] \leq \alpha' / (p-s)$, then using
the union bound we have that $\PP[S(\hat \mub) \neq S] \leq
\alpha$. In the following subsections, we will use the outlined
strategy to choose $\lambda$ for different estimation procedures.

\subsection{Upper bounds for the Lasso}
\label{sec:upper-bounds-lasso}

Recall that the Lasso estimator is given as
\begin{equation}
  \label{eq:estim-lasso}
  \hat \mub^{\ell_1} = \argmin_{\mub \in \RR^{p \times k}}\ \frac{1}{2}\norm{\Yb
    - \mub}_F^2 + \lambda \norm{\mub}_1.
\end{equation}
It is easy to see that the solution of the above estimation problem is
given as the following soft-thresholding operation
\begin{equation}
  \label{eq:lasso-solution}
  \hat \mu_{ij}^{\ell_1} = \left(1 - \frac{\lambda}{|Y_{ij}|}\right)_+Y_{ij},
\end{equation}
where $(x)_+ := \max(0, x)$. From~\eqref{eq:lasso-solution}, it is
obvious that $i \in S(\hat \mub^{\ell_1})$ if and only if the maximum
statistics, defined as
\begin{equation*}
  M_k(i) = \max_j |Y_{ij}|,
\end{equation*}
satisfies $M_k(i) \geq \lambda$. Therefore it is crucial to find the
critical value of the parameter $\lambda$ such that
\begin{equation*}
  \left\{
    \begin{array}{cccl}
      \mathbb{P}[M_{k}(i) < \lambda]& <& \delta'/s & \quad i
      \in S \\
      \mathbb{P}[M_{k}(i) > \lambda]& <& \alpha'/(p-s) & \quad i
      \in S^c. \\
    \end{array}
  \right.
\end{equation*}
We start by controlling the type one error. For $i \in S^c$ it holds
that 
\begin{equation} \label{eq:lasso-type-one}
  \PP[M_k(i) \geq \lambda] \leq 
    k \PP[|\Ncal(0,\sigma^2)| \geq \lambda] \leq
    \frac{2k\sigma}{\sqrt{2\pi}\lambda}
     \exp\big(-\frac{\lambda^2}{2\sigma^2}\big)
\end{equation}
using lemma~\ref{lem:tail-bound-normal}. Setting the right hand side
to $\alpha'/(p-s)$ in the above display, we obtain that $\lambda$ can
be set as 
\begin{equation}
  \label{eq:lower-bound-lambda-lasso}
  \lambda = \sigma\sqrt{2\ln \frac{2k(p-s)}{\sqrt{2\pi}\alpha'}}
\end{equation}
and \eqref{eq:lasso-type-one} holds as soon as $2\ln
\frac{2k(p-s)}{\sqrt{2\pi}\alpha'} \geq 1$. Next, we deal with the
type two error. Let
\begin{equation}
  \label{eq:binomial-parameter}
  \pi_k = \PP[|(1-\epsilon)\Ncal(0,\sigma^2) + \epsilon
      \Ncal(\mu_{\min},\sigma^2)| > \lambda].
\end{equation}
Then for $i \in S$, $\PP[M_k(i) < \lambda] \leq \PP[{\rm Bin}(k,
\pi_k) = 0]$, where ${\rm Bin}(k, \pi_k)$ denotes the binomial random
variable with parameters $(k, \pi_k)$. Control of the type two error
is going to be established through careful analysis of $\pi_k$ for
various regimes of problem parameters.

\begin{theorem} \label{thm:lasso}
  Let $\lambda$ be defined by \eqref{eq:lower-bound-lambda-lasso}. Suppose
  $\mu_{\min}$ satisfies one of the following two cases:
  \begin{enumerate}[(i)]
  \item $ \mu_{\min} = \sigma\sqrt{2 r \ln k}$ where
    \begin{equation*}
      r > \bigg(\sqrt{1 + C_{k,p,s}} - \sqrt{1 - \beta}\bigg)^2    
    \end{equation*}
    with
    \begin{equation*}
      C_{k,p,s} =\frac{\ln \frac{2(p-s)}{\sqrt{2\pi}\alpha'}}{\ln k}
    \end{equation*}
    and $\lim_{n \rightarrow \infty} C_{k,p,s} \in [0, \infty)$;
  \item $\mu_{\min} \geq \lambda$ when 
    \begin{equation*}
      \lim_{n \rightarrow \infty} \frac{\ln k}{\ln (p-s)} = 0
    \end{equation*}
    and $k^{1-\beta}/2 \geq \ln(s / \delta')$.
  \end{enumerate}
  Then
  \begin{equation*}
    \PP[S(\hat \mub^{\ell_1}) \neq S] \leq \alpha.
  \end{equation*}
\end{theorem}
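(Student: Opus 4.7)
The plan is to control Type I and Type II errors separately, using the union-bound reduction already set up in the preamble of the section: it suffices to show $\PP[i \in S(\hat\mub^{\ell_1})] \leq \alpha'/(p-s)$ for $i \in S^c$ and $\PP[i \notin S(\hat\mub^{\ell_1})] \leq \delta'/s$ for $i \in S$. Since $i \in S(\hat\mub^{\ell_1})$ if and only if the maximum statistic $M_k(i)$ crosses the threshold $\lambda$, the whole question reduces to a tail analysis of $M_k(i)$ under the null (for Type I) and under the mixture alternative (for Type II).

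For the Type I direction nothing new is needed: the calculation in \eqref{eq:lasso-type-one} shows that the choice \eqref{eq:lower-bound-lambda-lasso} of $\lambda$ already enforces $\PP[M_k(i) \geq \lambda] \leq \alpha'/(p-s)$ on every null row, provided $2\ln(2k(p-s)/(\sqrt{2\pi}\alpha'))\geq 1$, which holds in either asymptotic regime considered. The substance of the proof lies in the Type II bound. Conditionally on a signal row, the $k$ entries $Y_{ij}$ are i.i.d.\ from the mixture in \eqref{eq:normal-means-model}, so $\PP[M_k(i) < \lambda] = (1-\pi_k)^k \leq \exp(-k\pi_k)$ with $\pi_k$ as in \eqref{eq:binomial-parameter}, and it is enough to establish $k\pi_k \geq \ln(s/\delta')$ under each of the two hypotheses on $\mu_{\min}$.

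Case (ii) is essentially free: if $\mu_{\min} \geq \lambda$ then $\PP[|\Ncal(\mu_{\min},\sigma^2)|>\lambda]\geq 1/2$, so $\pi_k \geq \epsilon/2 = k^{-\beta}/2$ and hence $k\pi_k \geq k^{1-\beta}/2 \geq \ln(s/\delta')$ by assumption. Case (i) is where the real work happens. Bounding $\pi_k$ below by its signal component and applying the Mills-ratio lower bound, one obtains $\pi_k \geq \epsilon\, c\, \sigma\, (\lambda-\mu_{\min})^{-1} \exp\bigl(-(\lambda-\mu_{\min})^2/(2\sigma^2)\bigr)$ for an absolute constant $c$. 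With the exact parametrizations $\lambda^2/(2\sigma^2) = (1+C_{k,p,s})\ln k$ and $\mu_{\min}^2/(2\sigma^2) = r \ln k$, a direct computation yields $(\lambda-\mu_{\min})^2/(2\sigma^2) = (\sqrt{1+C_{k,p,s}} - \sqrt{r})^2 \ln k$, so $\pi_k \geq k^{-\beta - (\sqrt{1+C_{k,p,s}} - \sqrt{r})^2}/\sqrt{\ln k}$ up to constants. The assumption $r > (\sqrt{1+C_{k,p,s}} - \sqrt{1-\beta})^2$ is exactly what makes $1 - \beta - (\sqrt{1+C_{k,p,s}} - \sqrt{r})^2$ strictly positive, so $k\pi_k$ grows polynomially in $k$ and eventually dominates $\ln(s/\delta')$ for any $s \leq p$.

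The main obstacle is Case (i): one has to pair the Gaussian tail upper bound used in Type I with a matching Mills-ratio lower bound for Type II and keep track of the polylogarithmic prefactors carefully enough that the strict inequality $r > (\sqrt{1+C_{k,p,s}} - \sqrt{1-\beta})^2$ translates into a positive exponent of $k$ rather than merely a positive coefficient in front of $\ln k$. The binomial-to-exponential step $(1-\pi_k)^k \leq e^{-k\pi_k}$, the decomposition of $\pi_k$ into its two mixture components (the $(1-\epsilon)$ null part being dominated by the Type I tail and hence negligible), and the final union bound over rows are all routine once the signal-component lower bound is in place.
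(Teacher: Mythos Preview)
Your proposal is correct and follows essentially the same route as the paper: reduce Type~II control to $\PP[\mathrm{Bin}(k,\pi_k)=0]\le e^{-k\pi_k}$, lower-bound $\pi_k$ by its signal component $\epsilon\,\PP[\Ncal(\mu_{\min},\sigma^2)>\lambda]$, handle Case~(ii) by the trivial $\ge 1/2$ bound, and in Case~(i) compute $(\lambda-\mu_{\min})^2/(2\sigma^2)=(\sqrt{1+C_{k,p,s}}-\sqrt{r})^2\ln k$ via a Mills-ratio lower bound so that $k\pi_k$ has a strictly positive power of $k$. One small imprecision: the entries $Y_{ij}$ on a signal row are independent but not literally i.i.d.\ (the $\mu_{ij}$ may differ), so the equality $\PP[M_k(i)<\lambda]=(1-\pi_k)^k$ should be the inequality $\le(1-\pi_k)^k$, using monotonicity of $\PP[|\Ncal(\mu,\sigma^2)|>\lambda]$ in $|\mu|$; this is exactly how the paper states it and does not affect your argument.
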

The proof is given in $\S$\ref{sec:proof-theor-lasso}.

Now we can compare the lower bound on $\mu_{\min}^2$ from Theorem
\ref{thm:lower-bound} and the upper bound from Theorem
\ref{thm:lasso}. Without loss of generality we assume that $\sigma =
1$. We have that when $\beta < 1/2$ the lower bound is of the order
$\Ocal\left(\ln \left( k^{\beta-1/2} \ln(p-s) \right) \right)$ and the
upper bound is of the order $\ln (k(p-s))$. Ignoring the logarithmic
terms in $p$ and $s$, we have that the lower bound is of the order
$\tilde \Ocal(k^{\beta - 1/2})$ and the upper bound is of the order
$\tilde \Ocal(\ln k)$, which implies that the Lasso does not achieve
the lower bound when the non-zero rows are dense. When the non-zero
rows are sparse, $\beta > 1/2$, we have that both the lower and upper
bound are of the order $\tilde \Ocal( \ln k )$ (ignoring the terms
depending on $p$ and $s$).

\subsection{Upper bounds for the group Lasso}
\label{sec:upper-bounds-group}

Recall that the group Lasso estimator is given as
\begin{equation}
  \label{eq:estim-group-lasso}
  \hat \mub^{\ell_1/\ell_2} = \argmin_{\mub \in \RR^{p \times k}}\ \frac{1}{2}\norm{\Yb
    - \mub}_F^2 + \lambda \sum_{i \in [p]} \norm{\thetab_i}_2,
\end{equation}
where $\thetab_i = (\mu_{ij})_{j \in [k]}$. The group Lasso estimator
can be obtained in a closed form as a result of the following
thresholding operation~\citep[see, for example,][]{friedman10note}
\begin{equation}
  \label{eq:group-lasso-solution}
  \hat \thetab_{i}^{\ell_1/\ell_2} = 
  \left(1 - \frac{\lambda}{\norm{Y_{i\cdot}}{2}}
  \right)_+ Y_{i\cdot}
\end{equation}
where
$Y_{i\cdot}$ is the $i^{\rm th}$ row of the data.
From~\eqref{eq:group-lasso-solution}, it is obvious that $i \in S(\hat
\mub^{\ell_1/\ell_2})$ if and only if the statistic defined as
\begin{equation*}
  S_k(i) = \sum_j Y_{ij}^2,
\end{equation*}
satisfies $S_k(i) \geq \lambda$. The choice of $\lambda$ is crucial
for the control of type one and type two errors. We use the following
result, which directly follows from Theorem 2
in~\cite{baraud02non-asymptotic}.
\begin{lemma} \label{lem:baraud-chi-square-test}
  Let $\{Y_i = f_i + \sigma\xi_i\}_{i \in [n]}$ be a sequence of
  independent observations, where $f = \{f_i\}_{i \in [n]}$ is a
  sequence of numbers, $\xi_i \iidsim \Ncal(0,1)$ and $\sigma$
  is a known positive constant. Suppose that $t_{n, \alpha} \in \RR$
  satisfies $\PP[\chi^2_n > t_{n,\alpha}] \leq \alpha$. Let
  \begin{equation*}
    \phi_{\alpha} = I\{ \sum_{i\in[n]} Y_i^2 \geq t_{n,\alpha} \sigma^2 \}
  \end{equation*}
  be a test for $f = 0$ versus $f \neq 0$. Then the test
  $\phi_{\alpha}$ satisfies 
  \begin{equation*}
    \PP[\phi_{\alpha} = 1] \leq \alpha
  \end{equation*}
  when $f = 0$ and 
  \begin{equation*}
    \PP[\phi_{\alpha} = 0] \leq \delta
  \end{equation*}
  for all $f$ such that
  \begin{equation*}
    \norm{f}_2^2 \geq 2(\sqrt{5} + 4)\sigma^2 \ln\left(
      \frac{2e}{\alpha\delta} \right)\sqrt{n}.
  \end{equation*}
\end{lemma}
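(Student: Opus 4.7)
The plan is to handle the two conclusions separately. The type I bound is essentially immediate, while the type II bound reduces to standard Laurent--Massart style concentration for noncentral $\chi^2$ distributions, which is precisely what is packaged as Theorem~2 of \cite{baraud02non-asymptotic}.

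For the type I error, under $f = 0$ we have $Y_i = \sigma \xi_i$, so the studentized statistic $\sigma^{-2}\sum_{i\in[n]} Y_i^2$ is exactly distributed as $\chi_n^2$. The defining property of $t_{n,\alpha}$ then yields
\begin{equation*}
\PP[\phi_\alpha = 1] \;=\; \PP\!\left[\sum_{i\in[n]} Y_i^2 \geq t_{n,\alpha}\sigma^2\right] \;=\; \PP[\chi_n^2 \geq t_{n,\alpha}] \;\leq\; \alpha,
\end{equation*}
with nothing more to do.

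For the type II error, observe that under a general $f$, $\sigma^{-2}\sum_i Y_i^2$ follows a noncentral $\chi^2$ with $n$ degrees of freedom and noncentrality parameter $\lambda := \norm{f}_2^2/\sigma^2$. The natural route is to combine two bounds: an upper bound on the central quantile, $t_{n,\alpha} \leq n + 2\sqrt{n\ln(1/\alpha)} + 2\ln(1/\alpha)$, and a matching lower-tail inequality for the noncentral $\chi^2$, namely $\PP[\chi_n^2(\lambda) \leq n + \lambda - 2\sqrt{(n+2\lambda)\ln(1/\delta)}] \leq \delta$. Requiring the lower tail of $\chi_n^2(\lambda)$ at level $\delta$ to exceed $t_{n,\alpha}$ gives, after rearrangement, a quadratic inequality in $\sqrt{\lambda}$; solving it produces the $\sqrt{n}\,\ln(2e/(\alpha\delta))$ rate with an explicit absolute constant.

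The only real obstacle is bookkeeping to obtain the precise constant $2(\sqrt{5}+4)$. This is exactly what Baraud's Theorem~2 encapsulates, specialized to the one-dimensional collection of linear subspaces (the $D=1$ case in his notation), so I would simply invoke that result after verifying three points: that (i) the Gaussian noise model matches, (ii) the test statistic is the squared $\ell_2$ norm of the entire observation vector, and (iii) the calibration threshold $t_{n,\alpha}$ is a $\chi_n^2$ quantile. With those identifications, the stated lower bound on $\norm{f}_2^2$ is precisely Baraud's sufficient condition, and no further computation is required.
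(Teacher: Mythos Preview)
Your proposal is correct and takes essentially the same approach as the paper: the paper's proof consists of a single sentence invoking Theorem~2 of \cite{baraud02non-asymptotic}, and you do the same after spelling out the trivial type~I argument and sketching the concentration ideas behind the type~II bound. The extra detail you provide is fine but not required; the paper simply cites Baraud and stops.
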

\begin{proof}
  Follows immediately from Theorem 2 in \cite{baraud02non-asymptotic}.
\end{proof}
It follows directly from lemma~\ref{lem:baraud-chi-square-test} that
setting 
\begin{equation}
  \label{eq:group-lasso:lambda}
  \lambda = t_{n,\alpha'/(p-s)}\sigma^2  
\end{equation}
will control the probability of type one error at the desired level,
that is,
\begin{equation*}
  \PP[S_k(i) \geq \lambda] \leq \alpha'/(p-s), \qquad \forall i \in S^c.
\end{equation*}
The following theorem gives us the control of the type two error.
\begin{theorem} \label{thm:group-lasso}
  Let $\lambda = t_{n,\alpha'/(p-s)}\sigma^2$. Then 
  \begin{equation*}
    \PP[S(\hat \mub^{\ell_1/\ell_2}) \neq S] \leq \alpha
  \end{equation*}
  if 
  \begin{equation*}
    \mu_{\min} \geq \sigma \sqrt{2(\sqrt{5}+4)}
         \sqrt{\frac{k^{-1/2 + \beta}}{1-c}}
         \sqrt{\ln \frac{2e(2s -\delta')(p-s)}{\alpha'\delta'}}
  \end{equation*}
  where $c=\sqrt{2 \ln (2s / \delta') / k^{1-\beta}}$.
\end{theorem}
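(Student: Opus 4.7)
My plan is to decompose the total failure probability $\PP[S(\hat\mub^{\ell_1/\ell_2})\neq S]$ into type I and type II errors and handle each separately. Using the closed-form thresholding in~\eqref{eq:group-lasso-solution}, the event $\{i\in S(\hat\mub^{\ell_1/\ell_2})\}$ is equivalent to $\{S_k(i)\geq\lambda\}$, so Lemma~\ref{lem:baraud-chi-square-test} is the natural tool. By the union bound it suffices to show, for the chosen $\lambda=t_{n,\alpha'/(p-s)}\sigma^2$, that $\PP[S_k(i)\geq\lambda]\leq\alpha'/(p-s)$ for each $i\in S^c$ and $\PP[S_k(i)<\lambda]\leq\delta'/s$ for each $i\in S$. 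The first bound is immediate from the definition of $t_{n,\alpha'/(p-s)}$, giving total type I error at most $\alpha'$.

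The harder step is the type II control, because for $i\in S$ the signal vector $f_i=(\xi_{ij}\mu_{ij})_{j\in[k]}$ is itself random through the Bernoulli variables $\xi_{ij}\sim\mathrm{Bernoulli}(k^{-\beta})$. I would condition on $N_i:=\sum_j\xi_{ij}\sim\mathrm{Bin}(k,k^{-\beta})$, and note the deterministic lower bound $\|f_i\|_2^2\geq N_i\mu_{\min}^2$. A Chernoff/Hoeffding-type lower-tail inequality on the binomial yields
\begin{equation*}
\PP\bigl[N_i<(1-c)k^{1-\beta}\bigr]\leq\exp\!\Bigl(-\tfrac{c^2 k^{1-\beta}}{2}\Bigr),
\end{equation*}
and plugging in $c=\sqrt{2\ln(2s/\delta')/k^{1-\beta}}$ (the value appearing in the statement) makes this probability at most $\delta'/(2s)$. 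Conditionally on $\{N_i\geq(1-c)k^{1-\beta}\}$, we have a deterministic signal of squared norm at least $(1-c)k^{1-\beta}\mu_{\min}^2$, so Lemma~\ref{lem:baraud-chi-square-test} applied with failure probability $\delta_B$ gives $\PP[S_k(i)<\lambda]\leq\delta_B$ as soon as
\begin{equation*}
(1-c)k^{1-\beta}\mu_{\min}^2\ \geq\ 2(\sqrt{5}+4)\sigma^2\sqrt{k}\,\ln\!\Bigl(\tfrac{2e(p-s)}{\alpha'\delta_B}\Bigr).
\end{equation*}
Solving for $\mu_{\min}$ yields the stated bound, after choosing $\delta_B$ so that $\delta_B+\delta'/(2s)\leq\delta'/s$ (this splits the per-row type II budget between the Chernoff step and the Baraud step, which is what produces the $\ln\frac{2e(2s-\delta')(p-s)}{\alpha'\delta'}$ factor) and absorbing the $\sqrt{k}/k^{1-\beta}=k^{-1/2+\beta}$ factor into the square root. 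A final union bound over the $s$ rows in $S$ delivers total type II error at most $\delta'$, hence $\PP[S(\hat\mub^{\ell_1/\ell_2})\neq S]\leq\alpha'+\delta'=\alpha$.

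The only delicate point is the bookkeeping of the error budget: one must apportion $\delta'/s$ between the binomial lower-tail event and the Baraud chi-squared test exactly so that both (a) the implicit constant $c$ ends up as $\sqrt{2\ln(2s/\delta')/k^{1-\beta}}$ and (b) the logarithmic factor inside the Baraud requirement matches $\ln\frac{2e(2s-\delta')(p-s)}{\alpha'\delta'}$. The other steps are routine: the Chernoff bound on $\mathrm{Bin}(k,k^{-\beta})$ and the direct invocation of Lemma~\ref{lem:baraud-chi-square-test} applied with $n$ replaced by $k$ and signal $f=f_i$.
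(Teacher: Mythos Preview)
Your proposal is correct and follows the paper's proof essentially step for step: Chernoff on the binomial count to secure $\|f_i\|_2^2\geq(1-c)k^{1-\beta}\mu_{\min}^2$ with probability $1-\delta'/(2s)$, then Lemma~\ref{lem:baraud-chi-square-test} conditionally, plus the type~I control from the definition of $t_{n,\alpha'/(p-s)}$. The only refinement needed to hit the exact constant $2s-\delta'$ inside the logarithm is to use the slightly sharper decomposition $\PP[S_k(i)\leq\lambda]\leq\frac{\delta'}{2s}+\bigl(1-\frac{\delta'}{2s}\bigr)\delta_B$ and take $\delta_B=\delta'/(2s-\delta')$, rather than the cruder $\delta_B+\delta'/(2s)\leq\delta'/s$ you wrote (which would yield $2s$ instead of $2s-\delta'$).
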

The proof is given in $\S$\ref{sec:proof-theor-group}.

Using Theorem \ref{thm:lower-bound} and Theorem \ref{thm:group-lasso}
we can compare the lower bound on $\mu_{\min}^2$ and the upper
bound. Without loss of generality we assume that $\sigma = 1$. When
each non-zero row is dense, that is, when $\beta < 1/2$, we have that
both lower and upper bounds are of the order $\tilde \Ocal(k^{\beta -
  1/2})$ (ignoring the logarithmic terms in $p$ and $s$). This suggest
that the group Lasso performs better than the Lasso for the case where
there is a lot of feature sharing between different tasks. Recall from
previous section that the Lasso in this setting does not have the
optimal dependence on $k$. However, when $\beta > 1/2$, that is, in
the sparse non-zero row regime, we see that the lower bound is of the
order $\tilde \Ocal(\ln (k))$ whereas the upper bound is of the order
$\tilde \Ocal( k^{\beta-1/2} )$. This implies that the group Lasso
does not have optimal dependence on $k$ in the sparse non-zero row
setting.

\subsection{Upper bounds for the group Lasso with the mixed $(\infty,
  1)$ norm}
\label{sec:upper-bounds-group-inf}

In this section, we analyze the group Lasso estimator with the mixed
$(\infty, 1)$ norm, defined as
\begin{equation}
  \label{eq:estim-group-lasso-inf}
  \hat \mub^{\ell_1/\ell_\infty} = \argmin_{\mub \in \RR^{p \times k}}\ \frac{1}{2}\norm{\Yb
    - \mub}_F^2 + \lambda \sum_{i \in [p]} \norm{\thetab_i}_{\infty},
\end{equation}
where $\thetab_i = (\mu_{ij})_{j \in [k]}$. The closed form solution
for $\hat \mub^{\ell_1/\ell_\infty}$ can  be obtained
\citep[see][]{han09blockwise}, however, we are only going to use the
following lemma.
\begin{lemma}{\it \citep{han09blockwise}}
  $\hat \thetab_i^{\ell_1/\ell_\infty} = \zero$ if and only if $\sum_j
  |Y_{ij}| \leq \lambda$.
\end{lemma}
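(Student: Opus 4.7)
The plan is to derive the characterization via the KKT (subgradient) optimality conditions, exploiting the fact that the objective in \eqref{eq:estim-group-lasso-inf} is separable across rows. First I would observe that the Frobenius-norm data-fit term decomposes as $\frac{1}{2}\sum_{i \in [p]} \|Y_{i\cdot} - \thetab_i\|_2^2$ and the penalty decomposes as $\lambda \sum_{i \in [p]} \|\thetab_i\|_\infty$, so the joint minimization in \eqref{eq:estim-group-lasso-inf} reduces to $p$ independent row-wise problems of the form
\begin{equation*}
  \hat \thetab_i^{\ell_1/\ell_\infty} = \argmin_{\thetab \in \RR^k}\
     \tfrac{1}{2}\|Y_{i\cdot} - \thetab\|_2^2 + \lambda \|\thetab\|_\infty.
\end{equation*}
Fix $i$ and let $f(\thetab)$ denote this row-wise objective. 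Because $f$ is convex, $\hat\thetab = \zero$ is optimal if and only if $\zero \in \partial f(\zero)$.

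Next I would compute the subdifferential. The quadratic term contributes the gradient $-Y_{i\cdot}$ at $\thetab = \zero$, so
\begin{equation*}
  \partial f(\zero) = -Y_{i\cdot} + \lambda\, \partial \|\cdot\|_\infty(\zero).
\end{equation*}
The key ingredient is the standard identity that the subdifferential of a norm at the origin equals the unit ball of its dual norm; for $\|\cdot\|_\infty$ the dual norm is $\|\cdot\|_1$, hence
\begin{equation*}
  \partial \|\cdot\|_\infty(\zero) = \{ v \in \RR^k : \|v\|_1 \leq 1 \}.
\end{equation*}
Therefore $\zero \in \partial f(\zero)$ iff there exists $v \in \RR^k$ with $\|v\|_1 \leq 1$ and $Y_{i\cdot} = \lambda v$, which is equivalent to $\|Y_{i\cdot}\|_1 \leq \lambda$, i.e., $\sum_j |Y_{ij}| \leq \lambda$.

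To close the argument in both directions cleanly, I would spell out the two implications: if $\sum_j |Y_{ij}| \leq \lambda$, take $v = Y_{i\cdot}/\lambda$ (or $v = \zero$ when $Y_{i\cdot} = \zero$) as the explicit subgradient certificate, proving optimality of $\zero$ by convexity; conversely, if $\hat\thetab_i = \zero$ is optimal, the KKT condition forces such a $v$ to exist, forcing $\|Y_{i\cdot}\|_1 \leq \lambda$. There is no real obstacle here beyond invoking the dual-norm characterization of $\partial\|\cdot\|_\infty(\zero)$; the only mild care needed is handling the degenerate case $Y_{i\cdot} = \zero$, which is trivial since then $\hat\thetab_i = \zero$ is obviously optimal and $\sum_j |Y_{ij}| = 0 \leq \lambda$.
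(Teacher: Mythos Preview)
Your argument is correct. The paper itself does not supply a proof: it simply refers the reader to the proof of Proposition~5 in \cite{han09blockwise}. Your route---row-wise separability, then the first-order optimality condition $\zero \in \partial f(\zero)$, then the identification $\partial\|\cdot\|_\infty(\zero) = \{v:\|v\|_1\le 1\}$---is the standard and cleanest way to obtain the result, and in fact it is exactly the kind of calculation one would find behind the cited proposition. One very minor addition you could make explicit: the row-wise objective is \emph{strictly} convex (the quadratic term is), so the minimizer is unique, which is what justifies writing ``$\hat\thetab_i^{\ell_1/\ell_\infty}=\zero$'' rather than merely ``$\zero$ is \emph{a} minimizer.'' Otherwise nothing is missing.
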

\begin{proof}
  See the proof of Proposition 5 in \cite{han09blockwise}.
\end{proof}
Suppose that the penalty parameter $\lambda$ is set as
\begin{equation} \label{eq:lambda-group-inf}
  \lambda = k\sigma\sqrt{2\ln \frac{k(p-s)}{\alpha'}}.
\end{equation}
Then it follows directly from lemma~\ref{lem:tail-bound-normal} that
\begin{equation*}
  \PP[\sum_j|Y_{ij}| \geq \lambda] 
    \leq k \max_j\PP[|Y_{ij}| \geq \lambda/k]
    \leq \alpha'/(p-s), \qquad \forall i \in S^c,
\end{equation*}
which implies that the probability of the type one error is controlled
at the desired level.

\begin{theorem} \label{thm:group-lasso-inf} Let the penalty parameter
  $\lambda$ be defined by~\eqref{eq:lambda-group-inf}. Then
  \begin{equation*}
    \PP[S(\hat \mub^{\ell_1/\ell_\infty}) \neq S] \leq \alpha
  \end{equation*}
  if 
  \begin{equation*}
    \mu_{\min}\ \geq\ \frac{1+\tau}{1-c} k^{-1+\beta}\lambda
  \end{equation*}
  where $c=\sqrt{2 \ln (2s / \delta') / k^{1-\beta}}$ and $\tau =
  \sigma\sqrt{2k\ln \frac{2s - \delta'}{\delta'}} / \lambda$.
\end{theorem}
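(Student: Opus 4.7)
The plan is to follow the same budget decomposition $\alpha = \alpha' + \delta'$ used in the preceding two subsections. Type I is already taken care of: the displayed Gaussian tail bound immediately before the statement, together with the choice~\eqref{eq:lambda-group-inf} and a union bound over $S^c$, gives $\PP[\exists i \in S^c : i \in S(\hat\mub^{\ell_1/\ell_\infty})] \leq \alpha'$. So the entire work lies in showing that for every fixed $i \in S$,
\begin{equation*}
\PP\Bigl[\sum_j |Y_{ij}| \leq \lambda\Bigr] \leq \delta'/s,
\end{equation*}
since then a union bound over $S$ yields total type-II error $\leq \delta'$, and summing the two budgets closes the argument.

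The core step is a lower bound on $\sum_j|Y_{ij}|$ that is on the scale of $\lambda$. Using the Bernoulli representation I would write $Y_{ij} = \xi_{ij}\mu_{ij} + \sigma Z_{ij}$ with $\xi_{ij}\sim{\rm Bernoulli}(k^{-\beta})$ independent of $Z_{ij}\sim\Ncal(0,1)$. The naive estimate $|Y_{ij}| \geq \xi_{ij}|\mu_{ij}| - \sigma|Z_{ij}|$ is hopeless because $\sum_j|Z_{ij}|$ has mean of order $k$, which completely swamps the signal $\mu_{\min}k^{1-\beta}$. Instead I would use $|a| \geq \eta a$ for every $\eta \in \{-1,0,1\}$ with $\eta_j := {\rm sign}(\xi_{ij}\mu_{ij})$, yielding
\begin{equation*}
\sum_j |Y_{ij}| \ \geq\ \sum_j \eta_j Y_{ij} \ =\ \sum_j \xi_{ij}|\mu_{ij}| + \sigma \sum_j \eta_j Z_{ij} \ \geq\ \mu_{\min} N_i + \sigma W_i,
\end{equation*}
where $N_i := \sum_j\xi_{ij}$ and $W_i := \sum_j\eta_j Z_{ij}$. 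Conditionally on $\xi$, the $\eta_j$'s are deterministic with $\sum_j \eta_j^2 = N_i \leq k$, so $W_i \sim \Ncal(0, N_i)$; marginally $W_i$ is sub-Gaussian with proxy variance at most $k$.

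Next I would split the failure event into two halves, each controlled at probability $\delta'/(2s)$ (up to the slight discrepancy in the logarithm). For the signal, $N_i \sim {\rm Bin}(k, k^{-\beta})$ with mean $k^{1-\beta}$, and the multiplicative Chernoff lower tail gives
\begin{equation*}
\PP\bigl[N_i < (1-c) k^{1-\beta}\bigr] \leq \exp(-c^2 k^{1-\beta}/2),
\end{equation*}
so that the stated $c = \sqrt{2\ln(2s/\delta')/k^{1-\beta}}$ makes this $\leq \delta'/(2s)$. For the noise, the Gaussian tail applied to $W_i$ with proxy variance $k$ gives
\begin{equation*}
\PP\bigl[\sigma|W_i| > \tau\lambda\bigr] \leq 2\exp\!\Bigl(-\frac{(\tau\lambda)^2}{2\sigma^2 k}\Bigr),
\end{equation*}
and the definition of $\tau$ drives this to the same order (the logarithm $(2s-\delta')/\delta'$ in the statement absorbs the extra factor of $2$ from the symmetric tail). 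On the intersection of the two good events,
\begin{equation*}
\sum_j |Y_{ij}| \geq (1-c)k^{1-\beta}\mu_{\min} - \tau\lambda,
\end{equation*}
and the hypothesis $\mu_{\min} \geq \frac{1+\tau}{1-c}k^{-1+\beta}\lambda$ rearranges exactly to $(1-c)k^{1-\beta}\mu_{\min} - \tau\lambda \geq \lambda$, so the intersection forces $i \in S(\hat\mub^{\ell_1/\ell_\infty})$.

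The main obstacle is conceptual rather than technical: one must recognize that pointwise triangle inequality on $|Y_{ij}|$ is fatally lossy, and replace it with the signed-sign trick so that all of the noise aggregates into a single Gaussian of variance at most $k$. It is precisely this $\sqrt{k}$ scaling (rather than the naive $k$) that is reflected in $\tau\lambda = \sigma\sqrt{2k\ln((2s-\delta')/\delta')}$; every other part of the proof — Chernoff for $N_i$, rearrangement of the $\mu_{\min}$ inequality, and the two union bounds — is routine.
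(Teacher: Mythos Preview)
Your proposal is correct and follows essentially the same route as the paper: Chernoff on the Bernoulli count $N_i$ to secure $(1-c)k^{1-\beta}$ active coordinates, a Gaussian tail bound on the aggregated noise with variance at most $k$, and the algebraic rearrangement of the $\mu_{\min}$ hypothesis into $(1-c)k^{1-\beta}\mu_{\min}\geq(1+\tau)\lambda$. Your explicit ``signed-sign'' justification of the lower bound $\sum_j|Y_{ij}|\geq \mu_{\min}N_i+\sigma W_i$ is in fact more careful than the paper, which simply asserts the corresponding inequality with $z_k\sim\Ncal(0,k)$; the paper also uses a conditioning bookkeeping (rather than your plain union bound) so that the two pieces combine to exactly $\delta'/s$, which is why the odd constant $(2s-\delta')/\delta'$ appears in $\tau$.
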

The proof is given in $\S$\ref{sec:proof-theor-group-inf}.

Comparing upper bounds for the Lasso and the group Lasso with the
mixed $(2,1)$ norm with the result of Theorem
\ref{thm:group-lasso-inf}, we can see that both the Lasso and the
group Lasso have better dependence on $k$ than the group Lasso with
the mixed $(\infty, 1)$ norm. The difference becomes more pronounced
as $\beta$ increases. This suggest that we should be very cautious
when using the group Lasso with the mixed $(\infty, 1)$ norm, since as
soon as the tasks do not share exactly the same features, the other two
procedures have much better performance on identifying the set of
non-zero rows.

\section{Improved estimation procedure}
\label{sec:impr-estim-proc}

We have observed in the last section that the Lasso procedure performs
better than the group Lasso when each non-zero row is sparse, while
the group Lasso (with the mixed $(2,1)$ norm) performs better when
each non-zero row is dense. Since in many practical situations one
does not how much overlap there is between different tasks, it would
be useful to combine the Lasso and the group Lasso in order to improve
the performance. This can be simply done by estimating $S(\hat
\mub^{\ell_1})$ using \eqref{eq:estim-lasso} and $S(\hat
\mub^{\ell_1/\ell_2})$ using \eqref{eq:estim-group-lasso}
separately. Finally, we can combine these estimates by taking their
union $\hat S = S(\hat \mub^{\ell_1}) \cup S(\hat
\mub^{\ell_1/\ell_2})$. The outlined approach has the advantage that
one does not need to know in advance which estimation procedure to
use. From the theoretical analysis of the Lasso and the group Lasso,
we can see that controlling the error of omitting a non-zero row is
more difficult that controlling the probability of falsely including a
zero row. Therefore, combining the Lasso and the group Lasso estimate
can be seen as a way to increase the power to detect the non-zero rows.

\section{Simulation results}
\label{sec:simulation-results}

We conduct a small-scale empirical study of the performance of the
Lasso and the group Lasso (both with the mixed $(2,1)$ norm and with
the mixed $(\infty, 1)$ norm). Our empirical study shows that the
theoretical findings of $\S$\ref{sec:upper-bounds-support} describe
sharply the behavior of procedures even for small sample studies. In
particular, we demonstrate that as the minimum signal level
$\mu_{\min}$ varies in the model \eqref{eq:normal-means-model}, our
theory sharply determines points at which probability of identifying
non-zero rows of matrix $M$ successfully transitions from $0$ to $1$
for different procedures.

The simulation procedure can be described as follows. Without loss of
generality we let $S = [s]$ and draw the samples $\{Y_{ij}\}_{i\in[p],
  j\in[k]}$ according to the model
in~\eqref{eq:normal-means-model}. The total number of rows $p$ is
varied in $\{128, 256, 512, 1024\}$ and the number of columns is set
to $k = \lfloor p\log_2(p) \rfloor$. The sparsity of each non-zero row
is controlled by changing the parameter $\beta$ in $\{0, 0.25, 0.5,
0.75\}$ and setting $\epsilon = k^{-\beta}$. The number of non-zero
rows is set to $s = \lfloor \log_2 (p) \rfloor$, the sample size is
set to $n = 0.1p$ and $\sigma_0 = 1$. The parameters $\alpha'$ and
$\delta'$ are both set to $0.01$. For each setting of the parameters,
we report our results averaged over 1000 simulation runs. Simulations
with other choices of parameters $n, s$ and $k$ have been tried out,
but the results were qualitatively similar and, hence, we do not
report them here.

\subsection{Lasso}

We investigate the performance on the Lasso for the purpose of
estimating the set of non-zero rows,
$S$. Figure~\ref{fig:lasso:exper_4} plots the probability of success
as a function of the signal strength. On the same figure we plot the
probability of success for the group Lasso with both $(2,1)$ and
$(\infty, 1)$-mixed norms. Using theorem \ref{thm:lasso}, we set
\begin{equation}
  \label{eq:exp:lasso}
  \mu_{\rm lasso} = \sqrt{2 (r + 0.001) \ln k}
\end{equation}
where $r$ is defined in theorem \ref{thm:lasso}. Next, we generate
data according to \eqref{eq:normal-means-model} with all elements
$\{\mu_{ij}\}$ set to $\mu = \rho \mu_{\rm lasso}$, where $\rho \in
[0.05,2]$. The penalty parameter $\lambda$ is chosen as in
\eqref{eq:lower-bound-lambda-lasso}. Figure~\ref{fig:lasso:exper_4}
plots probability of success as a function of the parameter $\rho$,
which controls the signal strength. 
This probability transitions very sharply from 0 to 1.
A rectangle on a horizontal line
represents points at which the probability $\PP[\hat S = S]$ is between
$0.05$ and $0.95$. From each subfigure in
Figure~\ref{fig:lasso:exper_4}, we can observe that the probability of
success for the Lasso transitions from $0$ to $1$ for the same value
of the parameter $\rho$ for different values of $p$, which indicated
that, except for constants, our theory correctly characterizes the
scaling of $\mu_{\min}$. In addition, we can see that the Lasso
outperforms the group Lasso (with $(2,1)$-mixed norm) when each
non-zero row is very sparse (the parameter $\beta$ is close to one).

\begin{figure}[!ht]
  \centering
  Probability of successful support recovery: Lasso\\
\subfigure{
    \includegraphics[width=0.54\columnwidth]{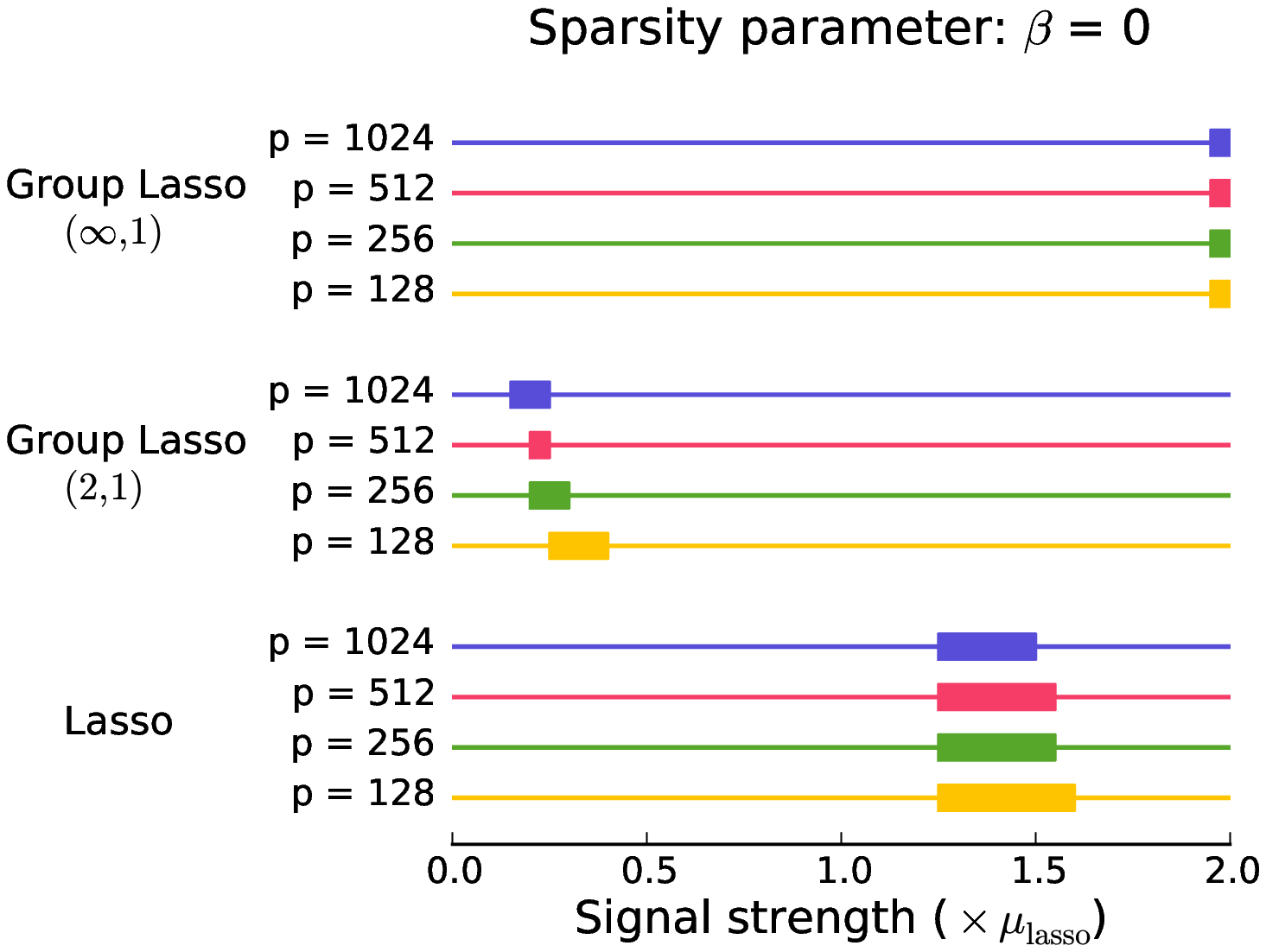}
    } 
\subfigure{
    \includegraphics[width=0.36\columnwidth]{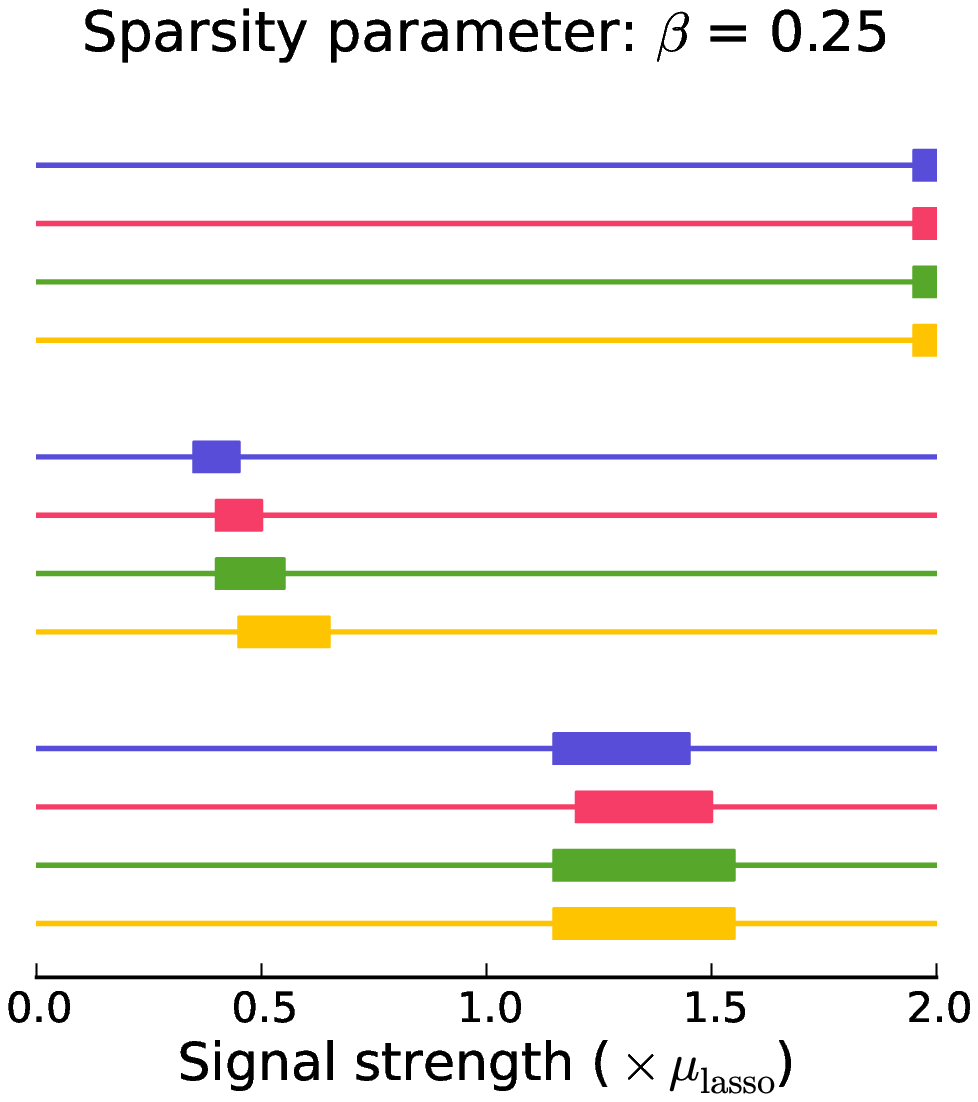}
    } 
\subfigure{
    \includegraphics[width=0.54\columnwidth]{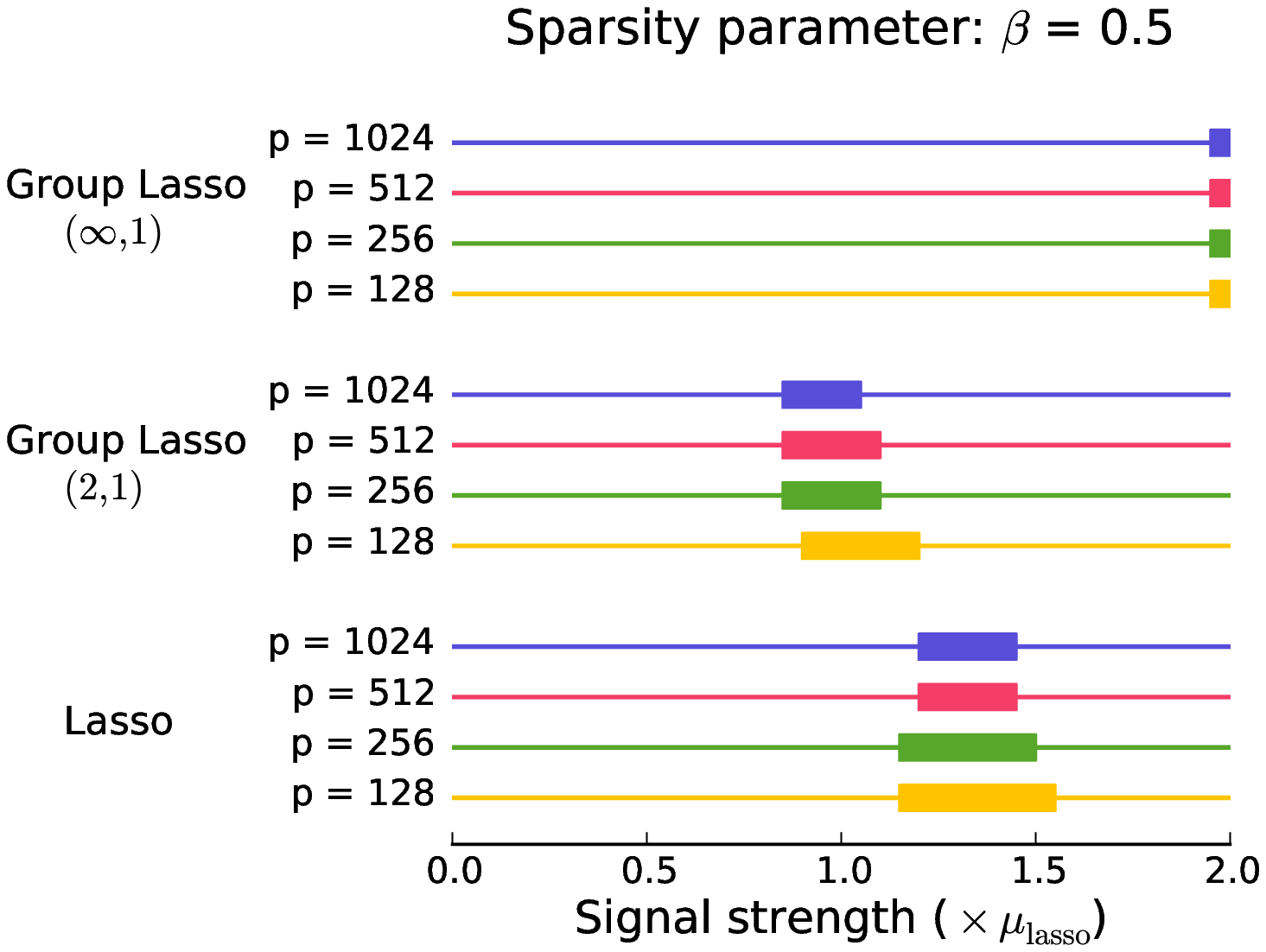}
    } 
\subfigure{
    \includegraphics[width=0.36\columnwidth]{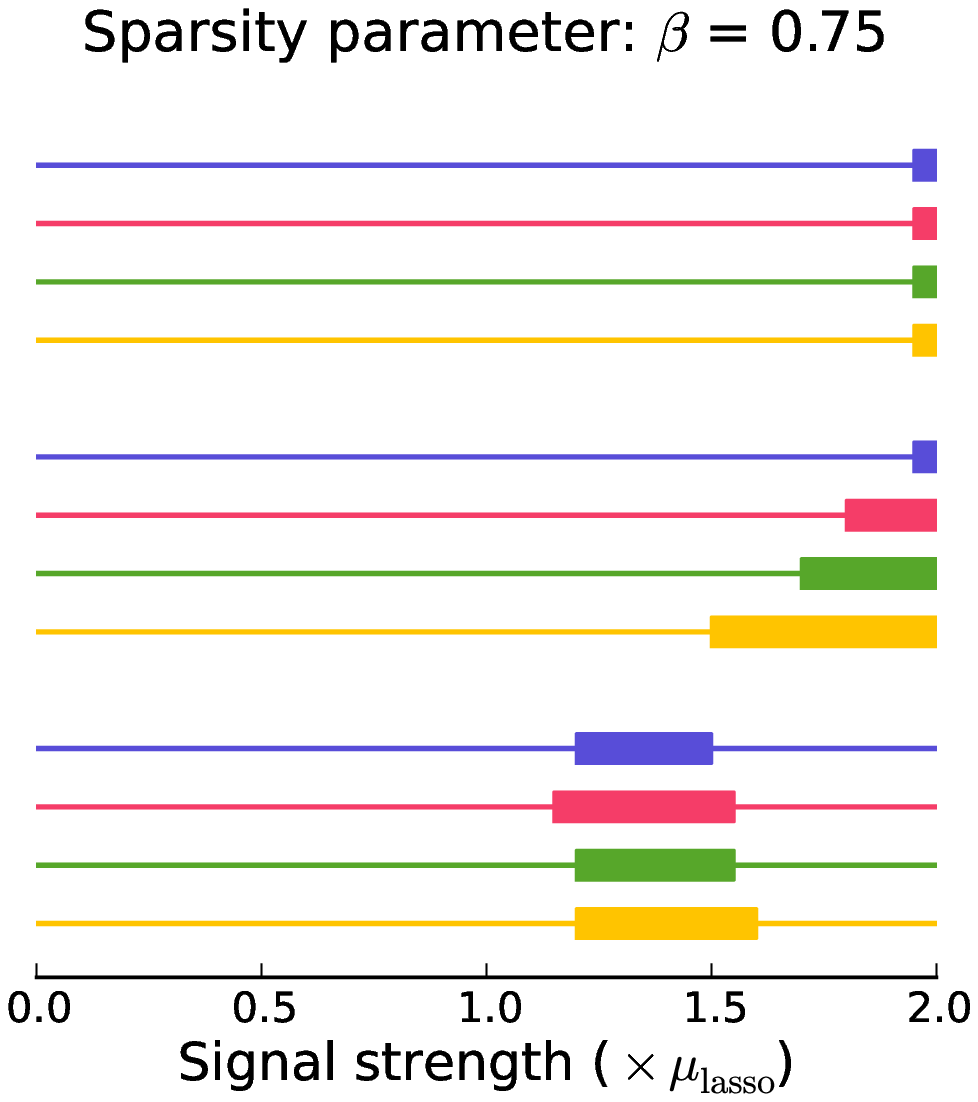}
    } 
    \caption{The probability of success for the Lasso for the problem
      of estimating $S$ plotted against the signal strength, which is
      varied as a multiple of $\mu_{\rm lasso}$ defined in
      \eqref{eq:exp:lasso}. A rectangle on each horizontal line
      represents points at which the probability $\PP[\hat S = S]$ is
      between $0.05$ and $0.95$. Different subplots represent the
      probability of success as the sparsity parameter $\beta$
      changes.}
    \label{fig:lasso:exper_4}
\end{figure}

\subsection{Group Lasso}

Next, we focus on the empirical performance of the group Lasso with
the mixed $(2,1)$ norm. Figure~\ref{fig:group-lasso:exper_4} plots the
probability of success as a function of the signal strength. Using
theorem \ref{thm:group-lasso}, we set
\begin{equation}
  \label{eq:exp:group-lasso}
    \mu_{\rm group} = \sigma \sqrt{2(\sqrt{5}+4)}
         \sqrt{\frac{k^{-1/2 + \beta}}{1-c}}
         \sqrt{\ln \frac{(2s -\delta')(p-s)}{\alpha'\delta'}}
\end{equation}
where $c$ is defined in theorem \ref{thm:group-lasso}. Next, we
generate data according to \eqref{eq:normal-means-model} with all
elements $\{\mu_{ij}\}$ set to $\mu = \rho \mu_{\rm group}$, where
$\rho \in [0.05,2]$. The penalty parameter $\lambda$ is given by
\eqref{eq:group-lasso:lambda}. Figure~\ref{fig:group-lasso:exper_4}
plots probability of success as a function of the parameter $\rho$,
which controls the signal strength. A rectangle on a horizontal line
represents points at which the probability $\PP[\hat S = S]$ is
between $0.05$ and $0.95$. From each subfigure in
Figure~\ref{fig:group-lasso:exper_4}, we can observe that the
probability of success for the group Lasso transitions from $0$ to $1$
for the same value of the parameter $\rho$ for different values of
$p$, which indicated that, except for constants, our theory correctly
characterizes the scaling of $\mu_{\min}$. We observe also that the
group Lasso outperforms the Lasso when each non-zero row is not too
sparse, that is, when there is a considerable overlap of features
between different tasks.

\begin{figure}[t]
  \centering
  Probability of successful support recovery: group Lasso\\
\subfigure{
    \includegraphics[width=0.54\columnwidth]{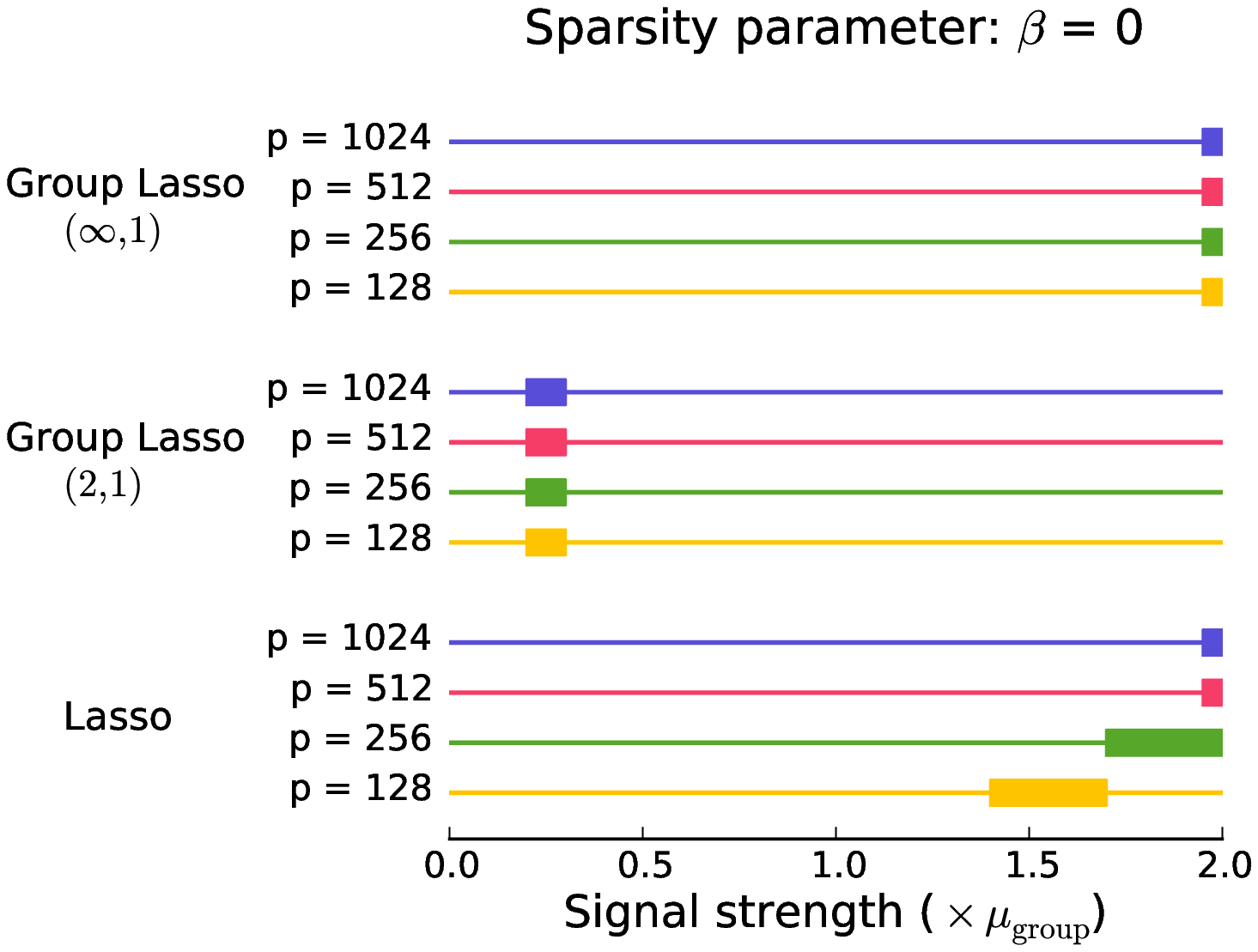}
    } 
\subfigure{
    \includegraphics[width=0.36\columnwidth]{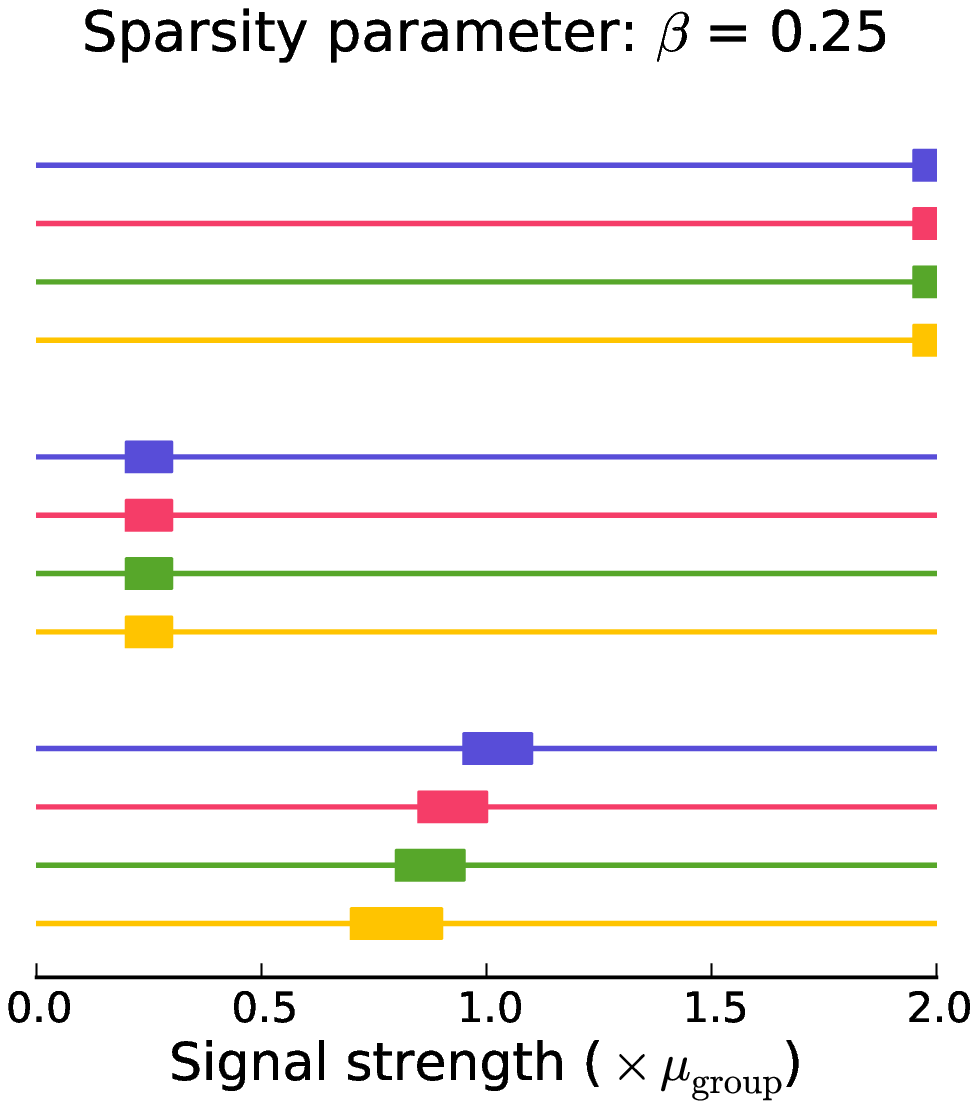}
    } 
\subfigure{
    \includegraphics[width=0.54\columnwidth]{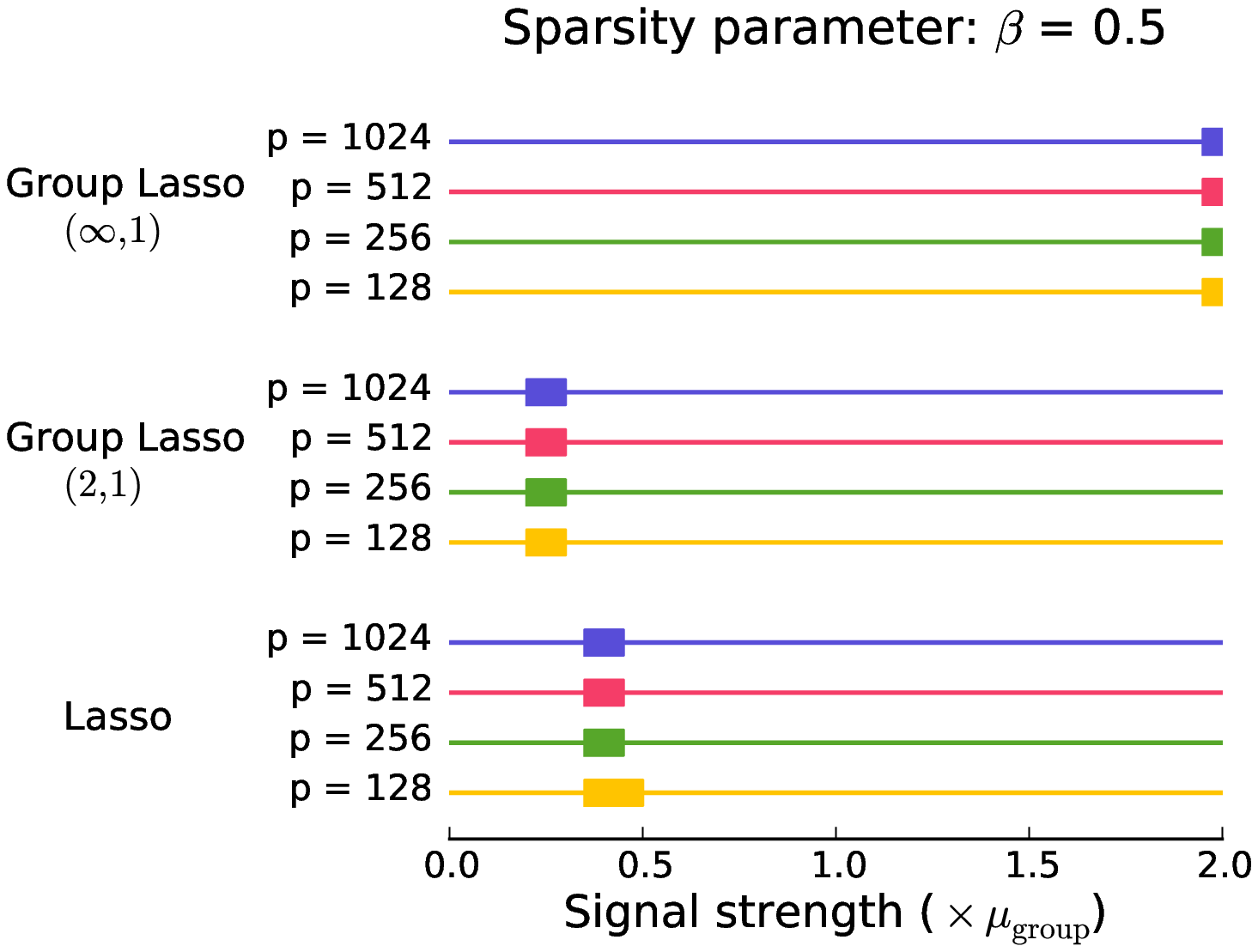}
    } 
\subfigure{
    \includegraphics[width=0.36\columnwidth]{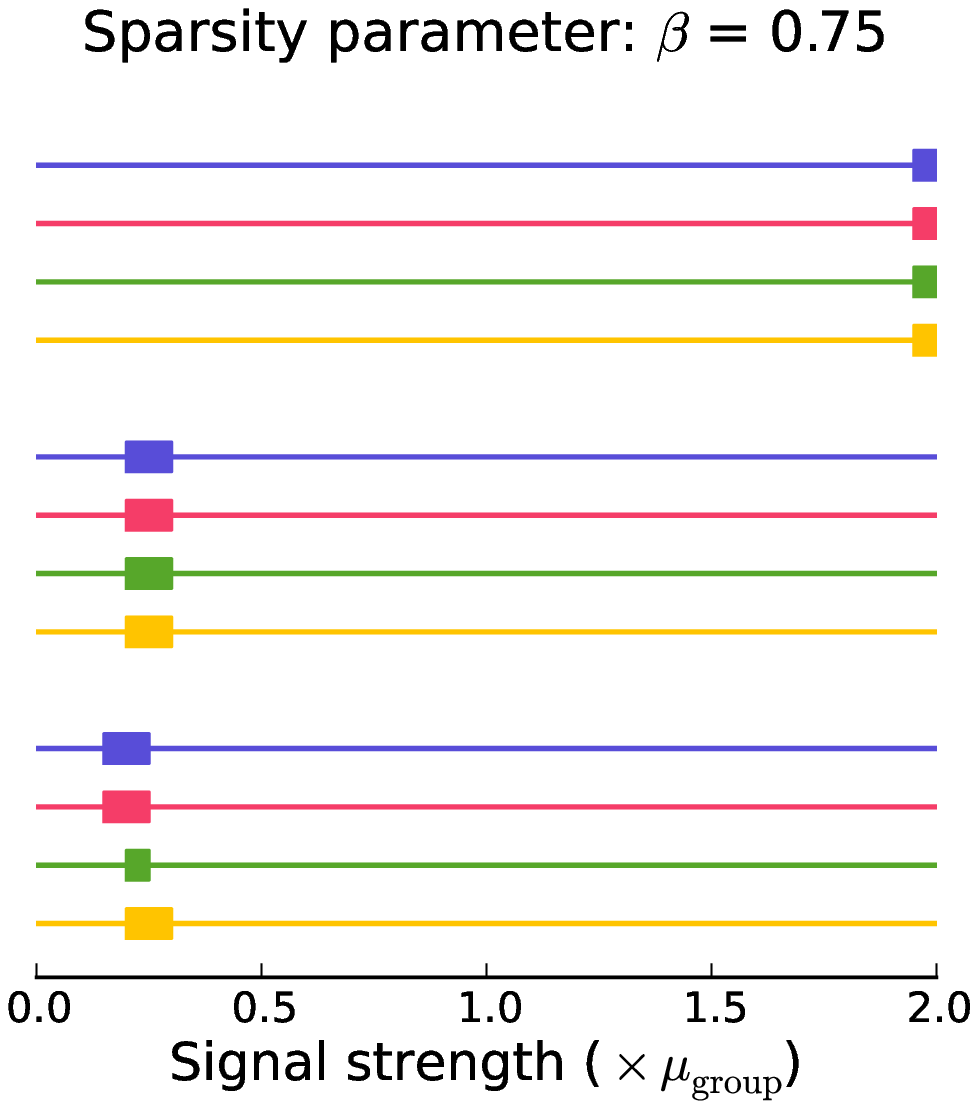}
    } 
    \caption{The probability of success for the group Lasso for the
      problem of estimating $S$ plotted against the signal strength,
      which is varied as a multiple of $\mu_{\rm group}$ defined in
      \eqref{eq:exp:group-lasso}. A rectangle on each horizontal line
      represents points at which the probability $\PP[\hat S = S]$ is
      between $0.05$ and $0.95$. Different subplots represent the
      probability of success as the sparsity parameter $\beta$
      changes.}
    \label{fig:group-lasso:exper_4}
\end{figure}

\subsection{Group Lasso with the mixed $(\infty, 1)$ norm}

Next, we focus on the empirical performance of the group Lasso with
the mixed $(\infty,1)$ norm. Figure~\ref{fig:group-lasso-inf:exper_4}
plots the probability of success as a function of the signal
strength. Using theorem \ref{thm:group-lasso-inf}, we set
\begin{equation}
  \label{eq:exp:group-lasso-inf}
    \mu_{\rm infty} = \frac{1+\tau}{1-c} k^{-1+\beta} \lambda
\end{equation}
where $\tau$ and $c$ are defined in theorem \ref{thm:group-lasso-inf}
and $\lambda$ is given by \eqref{eq:lambda-group-inf}. Next, we
generate data according to \eqref{eq:normal-means-model} with all
elements $\{\mu_{ij}\}$ set to $\mu = \rho \mu_{\rm infty}$, where
$\rho \in [0.05,2]$. Figure~\ref{fig:group-lasso-inf:exper_4} plots
probability of success as a function of the parameter $\rho$, which
controls the signal strength. A rectangle on a horizontal line
represents points at which the probability $\PP[\hat S = S]$ is
between $0.05$ and $0.95$. From each subfigure in
Figure~\ref{fig:group-lasso-inf:exper_4}, we can observe that the
probability of success for the group Lasso transitions from $0$ to $1$
for the same value of the parameter $\rho$ for different values of
$p$, which indicated that, except for constants, our theory correctly
characterizes the scaling of $\mu_{\min}$. We also observe that the
group Lasso with the mixed $(\infty, 1)$ norm never outperforms the
Lasso or the group Lasso with the mixed $(2,1)$ norm.

\begin{figure}[!h]
  \centering
  Probability of successful support recovery: group Lasso with
  the mixed $(\infty, 1)$ norm\\
\subfigure{
    \includegraphics[width=0.54\columnwidth]{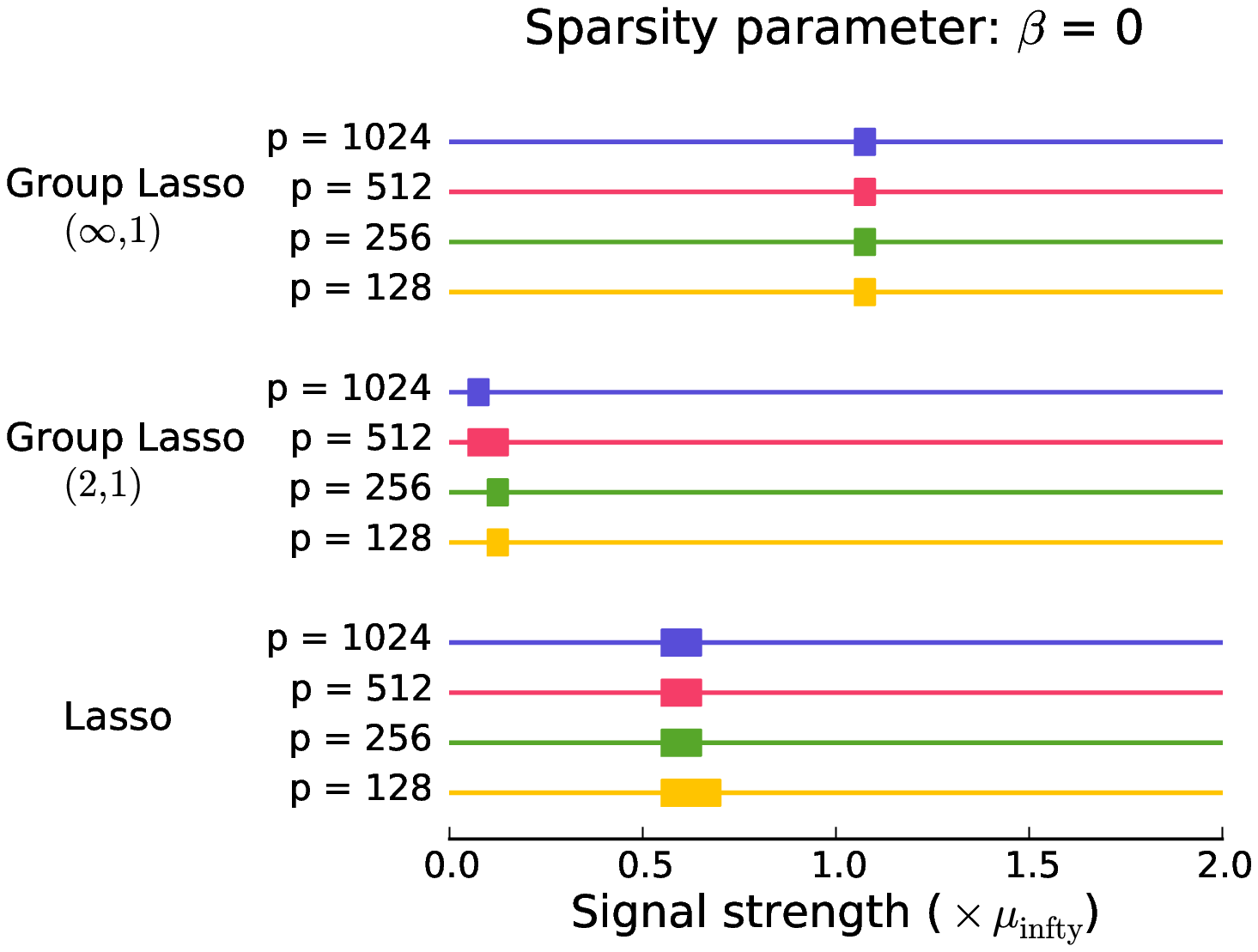}
    } 
\subfigure{
    \includegraphics[width=0.36\columnwidth]{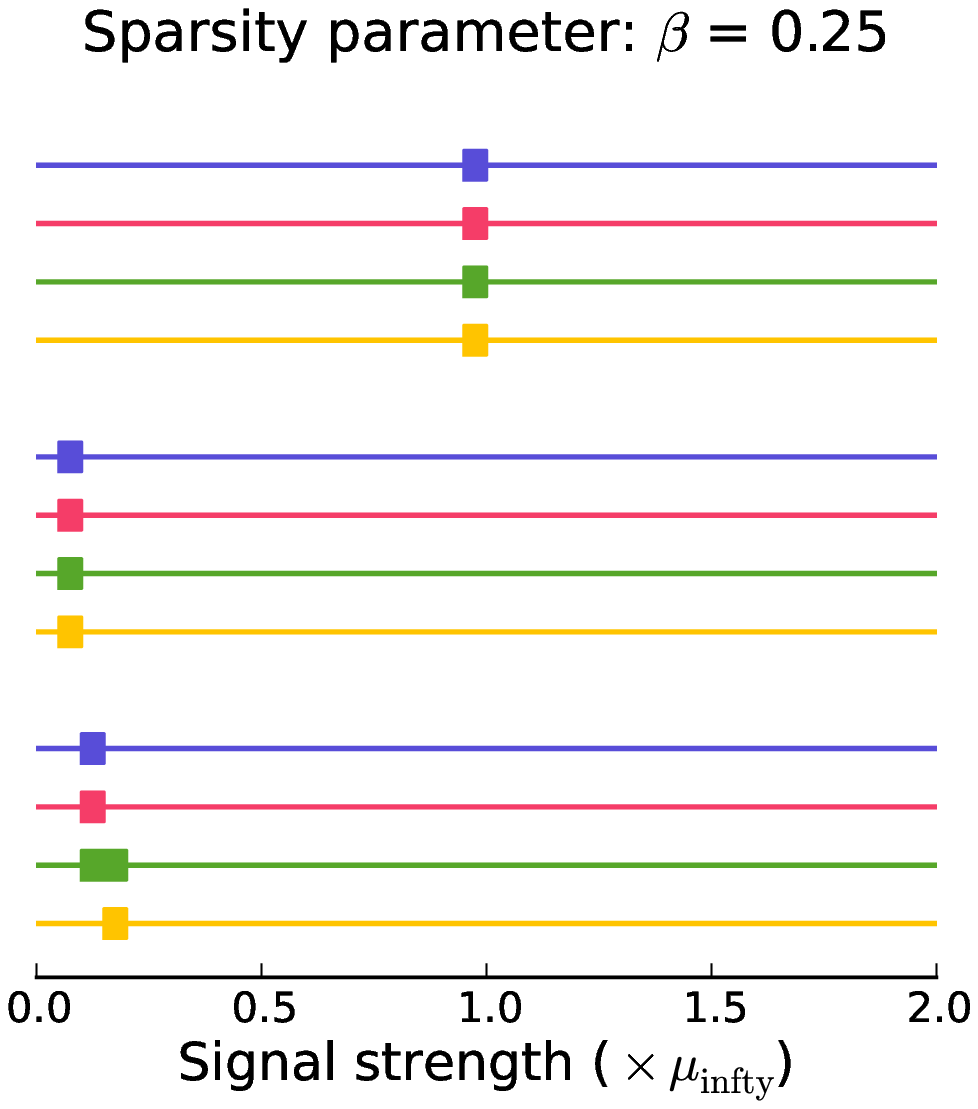}
    } 
\subfigure{
    \includegraphics[width=0.54\columnwidth]{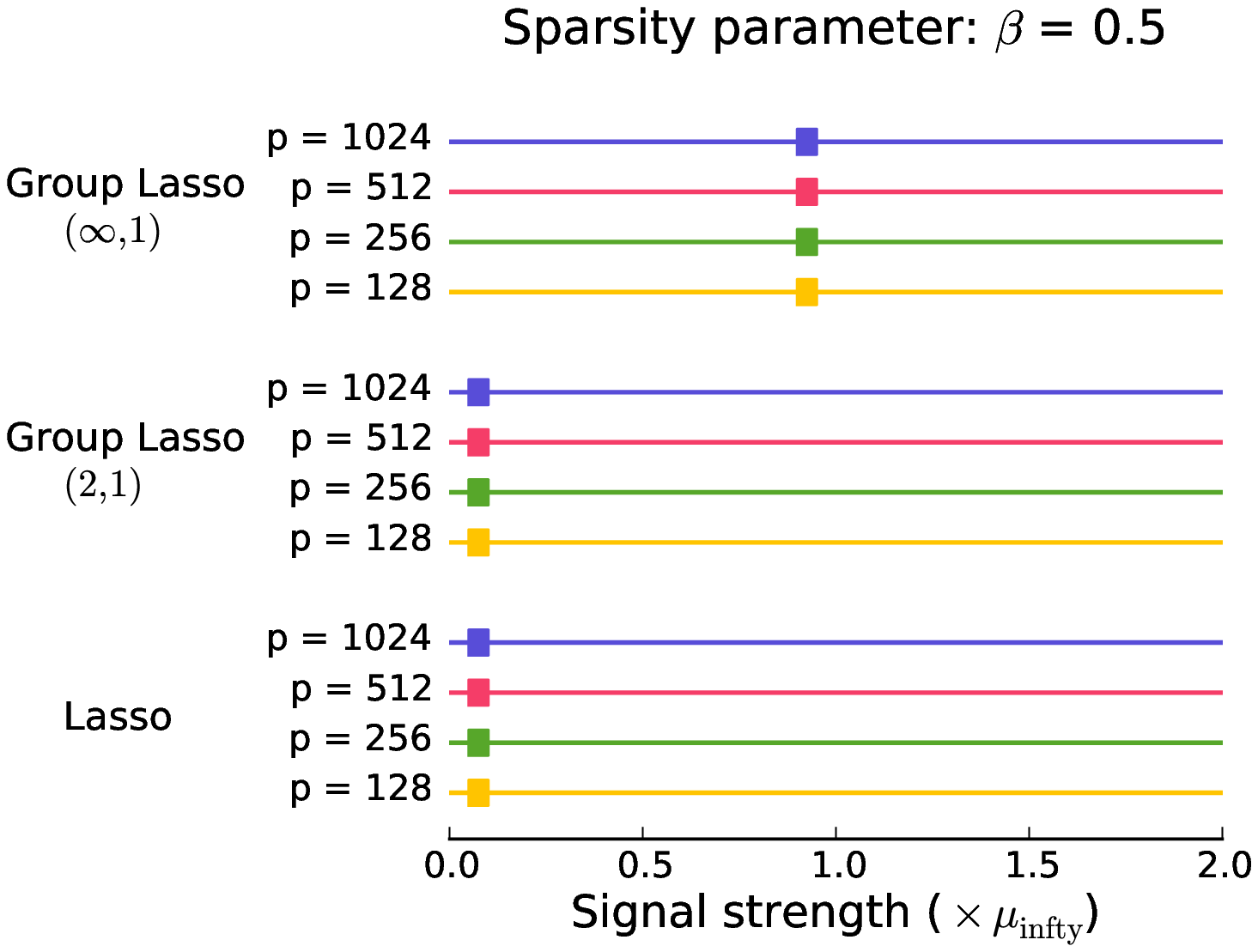}
    } 
\subfigure{
    \includegraphics[width=0.36\columnwidth]{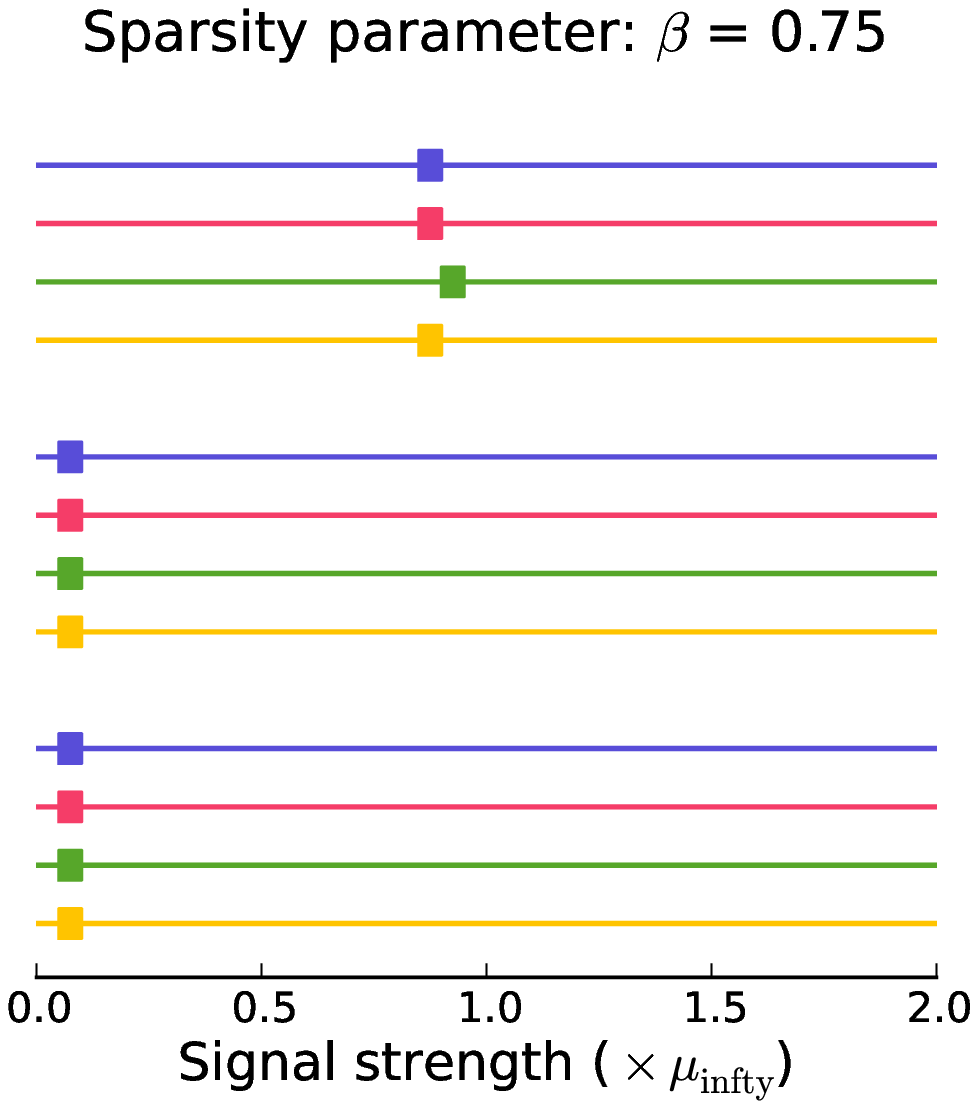}
    } 
    \caption{The probability of success for the group Lasso with mixed
      $(\infty, 1)$ norm for the problem of estimating $S$ plotted
      against the signal strength, which is varied as a multiple of
      $\mu_{\rm infty}$ defined in \eqref{eq:exp:group-lasso-inf}. A
      rectangle on each horizontal line represents points at which the
      probability $\PP[\hat S = S]$ is between $0.05$ and
      $0.95$. Different subplots represent the probability of success
      as the sparsity parameter $\beta$ changes.}
    \label{fig:group-lasso-inf:exper_4}
\end{figure}

\section{Discussion}
\label{sec:discussion}

We have studied the benefits of task sharing in sparse problems.
Under many scenarios, the group lasso outperforms the lasso.  The
$\ell_1/\ell_2$ penalty seems to be a much better choice for the group
lasso than the $\ell_1/\ell_\infty$.  However, as pointed out to us by
Han Liu, for screening, where false discoveries are less important
than accurate recovery, it is possible that the $\ell_1/\ell_\infty$
penalty could be useful.

We focused on the Normal means model.  While this model is obviously a
simplified model, it is extremely useful for theoretical study.  The
Normal means model is commonly used in Statistics and we hope that
this paper encourages researchers in machine learning to consider
wider use of this model as well.

\section{Proofs}

This section collects technical proofs of the results presented in the
paper. Throughout the section we use $c_1, c_2, \ldots$ to denote
positive constants whose value may change from line to line.

\subsection{Proof of Theorem~\ref{thm:lower-bound}}

Without loss of generality, we may assume $\sigma=1$. Let $\phi(u)$ be
the density of $\Ncal(0,1)$ and define $\PP_{\zero}$ and $\PP_{\one}$
to be two probability measures on $\RR^k$ with the densities with
respect to the Lebesgue measure given as
\begin{equation}
  \label{eq:density:row:0}
  f_0(a_1, \ldots, a_k) = \prod_{j \in [k]} \phi(a_j)
\end{equation}
and 
\begin{equation}
  \label{eq:density:row:1}
  f_1(a_1, \ldots, a_k) = \EE_Z\EE_m \EE_{\xi} \prod_{j \in m}
  \phi(a_j - \xi_j \mu_{\min}) \prod_{j \not\in m} \phi(a_j)
\end{equation}
where $Z \sim {\rm Bin}(k, k^{-\beta})$, $m$ is a random variable
uniformly distributed over $\Mcal(Z, k)$ and $\{\xi_j\}_{j \in [k]}$
is a sequence of Rademacher random variables, independent of $Z$ and
$m$.  A Rademacher random variable takes values $\pm 1$ with
probability $\frac{1}{2}$.

To simplify the discussion, suppose that $p-s+1$ is divisible by 2.
Let $T = (p-s+1)/2$. Using $\PP_{\zero}$ and $\PP_{\one}$, we
construct the following three measures,
\begin{equation*}
  \tilde \QQ =  \PP_{\one}^{s-1} \otimes \PP_{\zero}^{p-s+1},
\end{equation*}
\begin{equation*}
  \QQ_0 = \frac{1}{T} 
    \sum_{{j \in \{s, \ldots, p\}} \atop {j \text{ odd}}} 
     \PP_{\one}^{s-1}  \otimes \PP_{\zero}^{j-s} 
     \otimes \PP_{\one} \otimes \PP_{\zero}^{p-j}
\end{equation*}
and
\begin{equation*}
  \QQ_1 = \frac{1}{T} 
    \sum_{{j \in \{s, \ldots, p\}} \atop {j \text{ even}}} 
     \PP_{\one}^{s-1}  \otimes \PP_{\zero}^{j-s} 
     \otimes \PP_{\one} \otimes \PP_{\zero}^{p-j}.
\end{equation*}
It holds that
\begin{equation}
\label{eq:lower:framework}
\begin{aligned}
  \inf_{\hat \mu} \sup_{M \in \MM} 
   \PP_{M}[ S(M) \neq S(\hat \mu)] 
   &\geq \inf_{\Psi}
   \max\ \Big( \QQ_0(\Psi = 1), \QQ_1(\Psi = 0) \Big) \\
   & \geq \frac{1}{2} - \frac{1}{2} \norm{\QQ_0 - \QQ_1}_1,
\end{aligned}
\end{equation}
where the infimum is taken over all tests $\Psi$ taking values in $\{0, 1\}$
and $\norm{\cdot}_1$ is the total variation distance between
probability measures. For a readable introduction on lower bounds on
the minimax probability of error, see Section 2 in
\cite{tsybakov09introduction}. In particular, our approach is related
to the one described in Section 2.7.4. We proceed by upper bounding
the total variation distance between $\QQ_0$ and $\QQ_1$. 
Let $g = d\PP_{\one} / d\PP_{\zero}$ and let $u_i \in \RR^k$ for
each $i \in [p]$, then
\begin{equation*}
\begin{aligned}
  \frac{d\QQ_0}{d\tilde\QQ}&(u_1, \ldots, u_p) \\ &= 
  \frac{1}{T} 
      \sum_{{j \in \{s, \ldots, p\}} \atop {j \text{ even}}} 
      \prod_{i \in \{1, \ldots, s-1\} }
         \frac{d\PP_{\one}}{d\PP_{\one}}(u_i)
      \prod_{i \in \{s, \ldots, j-1\}}
         \frac{d\PP_{\zero}}{d\PP_{\zero}}(u_i)
      \frac{d\PP_{\one}}{d\PP_{\zero}}(u_j)      
      \prod_{i \in \{j+1, \ldots, p\}}
         \frac{d\PP_{\zero}}{d\PP_{\zero}}(u_i) \\ &=
  \frac{1}{T} 
      \sum_{{j \in \{s, \ldots, p\}} \atop {j \text{ even}}}
      g(u_j)
\end{aligned}
\end{equation*}
and, similarly, we can compute $d\QQ_1/d\tilde\QQ$. The following holds
\begin{equation}
\label{eq:tv-distance}
\begin{aligned}
\|&\QQ_0 - \QQ_1\|_1^2 \\
&= \Bigg(  \int\Big|\frac{1}{T} \big(
   \sum_{{j \in \{s, \ldots, p\}} \atop {j \text{ even}}} 
      g(u_j) - 
   \sum_{{j \in \{s, \ldots, p\}} \atop {j \text{ odd}}} 
      g(u_j) \big) \Big|
   \prod_{i \in \{s, \ldots, p\}} d\PP_{\zero}(u_i)
\Bigg)^2 \\
& \leq \frac{1}{T^2} \int \Big(
   \sum_{{j \in \{s, \ldots, p\}} \atop {j \text{ even}}} 
      g(u_j) - 
   \sum_{{j \in \{s, \ldots, p\}} \atop {j \text{ odd}}} 
      g(u_j)   
\Big)^2
   \prod_{i \in \{s, \ldots, p\}} d\PP_{\zero}(u_i) \\
& = \frac{2}{T}\big(\PP_{\zero}(g^2) - 1\big),
\end{aligned}
\end{equation}
where the last equality follows by observing that
\begin{equation*}
  \int \sum_{{j \in \{s, \ldots, p\}} \atop {j \text{ even}}} 
       \sum_{{j' \in \{s, \ldots, p\}} \atop {j' \text{ even}}} 
       g(u_j)g(u_{j'})
   \prod_{{i \in \{s, \ldots, p\}} \atop {i \text{ even}}} 
     d\PP_{\zero}(u_i) = T\ \PP_{\zero}(g^2) + T^2 - T
\end{equation*}
and 
\begin{equation*}
  \int \sum_{{j \in \{s, \ldots, p\}} \atop {j \text{ even}}} 
       \sum_{{j' \in \{s, \ldots, p\}} \atop {j' \text{ odd}}} 
       g(u_j)g(u_{j'})
   \prod_{{i \in \{s, \ldots, p\}}}
     d\PP_{\zero}(u_i) = T^2.
\end{equation*}
Next, we proceed to upper bound $\PP_{\zero}(g^2)$, using some ideas
presented in the proof of Theorem~1 in
\cite{baraud02non-asymptotic}. Recall definitions of $f_0$ and $f_1$
in \eqref{eq:density:row:0} and \eqref{eq:density:row:1}
respectively. Then $g = d\PP_{\one}/d\PP_{\zero} = f_1/f_0$ and we
have
\begin{equation*}
\begin{aligned}
  g(a_1, \ldots, a_k) &=
  \EE_{Z}\EE_m \EE_\xi \bigg[ \exp\Big(-\frac{Z\mu_{\min}^2}{2} +
  \mu_{\min} \sum_{j \in m} \xi_{j}a_{j}\Big)
  \bigg] \\
  & = \EE_Z\bigg[\exp\Big(-\frac{Z\mu_{\min}^2}{2}\Big)\ \EE_m \Big[
     \prod_{j \in m} \cosh(\mu_{\min}a_{j}) \Big]
  \bigg].
\end{aligned}
\end{equation*}
Furthermore, let $Z' \sim {\rm Bin}(k, k^{-\beta})$ be independent of
$Z$ and $m'$ uniformly distributed over $\Mcal(Z', k)$. The following
holds
\begin{equation*}
\begin{aligned}
  &\PP_{\zero}(g^2)\\
  & = \PP_{\zero} \bigg( \EE_{Z',Z} \Big[
 \exp\Big(-\frac{(Z+Z')\mu_{\min}^2}{2}\Big)\ 
 \EE_{m,m'} 
 \prod_{j \in m} \cosh(\mu_{\min}a_{j})
   \prod_{j \in m'} \cosh(\mu_{\min}a_{j}) 
  \Big] \bigg) \\
  & = \EE_{Z',Z} \Big[
 \exp\Big(-\frac{(Z+Z')\mu_{\min}^2}{2}\Big)
 \\ & \qquad\qquad\qquad \EE_{m,m'} \Big[
 \prod_{j \in m \cap m'} \int \cosh^2(\mu_{\min}a_j) \phi(a_j) da_j \\
 & \qquad\qquad\qquad\qquad
 \prod_{j \in m \triangle m'} \int \cosh(\mu_{\min}a_j) \phi(a_j) da_j
 \Big]
\Big],
\end{aligned}
\end{equation*}
where we use $m \triangle m'$ to denote $(m \cup m')\bks(m \cap m')$.
By direct calculation, we have that 
\begin{equation*}
  \int \cosh^2(\mu_{\min}a_j) \phi(a_j) da_j = \exp(\mu_{\min}^2)
  \cosh(\mu_{\min}^2)
\end{equation*}
and
\begin{equation*}
  \int \cosh(\mu_{\min}a_j) \phi(a_j) da_j = \exp(\mu_{\min}^2/2).
\end{equation*}
Since $\frac{1}{2}|m \triangle m'| + |m \cap m'| = (Z + Z')/2$, we
have that 
\begin{equation*}
\begin{aligned}
  \PP_{\zero}(g^2)
  & = \EE_{Z, Z'} \Big[E_{m, m'}\Big[\big(\cosh(\mu_{\min}^2)\big)^{|m
    \cap m'|}\Big] \Big]\\
  & = \EE_{Z, Z'}\Big[ \sum_{j = 0}^k
      p_j \big(\cosh(\mu_{\min}^2)\big)^j \Big] \\
  & = \EE_{Z, Z'}\Big[ \EE_X \Big[ 
        \cosh(\mu_{\min}^2)^{X} \Big] \Big],
\end{aligned}
\end{equation*}
where 
\begin{equation*}
  p_j = \left\{ 
    \begin{array}{cc}
      0 & \text{if } j < Z + Z' - k \text{ or } j > \min(Z, Z') \\
      \frac{ {Z' \choose j} { {k - Z'} \choose {Z-j} } }
         {{k \choose  Z}}  & \text{otherwise}
    \end{array}
  \right.
\end{equation*} 
and $P[X = j] = p_j$. Therefore, $X$ follows a hypergeometric
distribution with parameters $k$, $Z$, $Z'/k$. [The first parameter
denotes the total number of stones in an urn, the second parameter
denotes the number of stones we are going to sample without
replacement from the urn and the last parameter denotes the fraction
of white stones in the urn.]  Then following \cite[][p. 173; see also
\cite{baraud02non-asymptotic}]{aldous85exchangeability}, we know that
$X$ has the same distribution as the random variable $\EE[\tilde X |
\Tcal]$ where $\tilde X$ is a binomial random variable with parameters
$Z$ and $Z' / k$, and $\Tcal$ is a suitable $\sigma$-algebra. By
convexity, it follows that
\begin{equation*}
\begin{aligned}
  \PP_{\zero}(g^2)
  & \leq \EE_{Z, Z'}\Big[ \EE_{\tilde X} \Big[ 
        \cosh(\mu_{\min}^2)^{\tilde X} \Big] \Big] \\
  & = \EE_{Z, Z'} \left[ 
    \exp\bigg( Z\ln
       \Big(1 + \frac{Z'}{k}\big(\cosh(\mu_{\min}^2) - 1\big)\Big)
       \bigg)\right]\\
  & = \EE_{Z'} \EE_{Z} \left[ 
    \exp\bigg( Z\ln
       \Big(1 + \frac{Z'}{k}u\Big)
       \bigg)\right]
\end{aligned}
\end{equation*}
where $\mu_{\min}^2 = \ln (1 + u + \sqrt{2u + u^2})$ with
\begin{equation*}
  u = \frac{\ln \Big( 1 + \frac{\alpha^2T}{2}\Big)}{2k^{1-2\beta}}.
\end{equation*}
Continuing with our calculations, we have that
\begin{equation}
  \label{eq:lower:bound:1}
\begin{aligned}
  \PP_{\zero}(g^2)
  & = \EE_{Z'} \exp\Big(
    k \ln\big( 1 + k^{-(1+\beta)}u Z'\big)
    \Big) \\
  & \leq \EE_{Z'}  \exp\Big(
    k^{-\beta}uZ'
    \Big) \\
  & = \exp\bigg(
     k \ln \Big(1 + k^{-\beta}
     \big(\exp(k^{-\beta}u\big) - 1)\Big)
  \bigg) \\
  & \leq \exp\Big(k^{1-\beta}
     \big(\exp\big(k^{-\beta}u\big) - 1\big)\Big) \\
  & \leq \exp \Big(2k^{1-2\beta}u\Big) \\
  & = 1 + \frac{\alpha^2T}{2},
\end{aligned}
\end{equation}
where the last inequality follows 
since $k^{-\beta}u < 1$
for all large $p$.
Combining \eqref{eq:lower:bound:1} with
\eqref{eq:tv-distance}, we have that
\begin{equation*}
\|\QQ_0 - \QQ_1\|_1 \leq \alpha,
\end{equation*}
which implies that
\begin{equation*}
  \inf_{\hat \mu} \sup_{M \in \MM} 
   \PP_{M}[ S(M) \neq S(\hat \mu)] 
   \geq \frac{1}{2} - \frac{1}{2} \alpha.
\end{equation*}

\subsection{Proof of Theorem~\ref{thm:lasso}}
\label{sec:proof-theor-lasso}

  Without loss of generality, we can assume that $\sigma = 1$ and
  rescale the final result. For $\lambda$ given in
  \eqref{eq:lower-bound-lambda-lasso}, it holds that $\PP[|\Ncal(0,1)
  \geq \lambda] = o(1)$. For the probability defined in
  \eqref{eq:binomial-parameter}, we have the following lower bound
  \begin{equation*}
    \pi_k = (1-\epsilon)\PP[|\Ncal(0, 1)| \geq \lambda] + 
            \epsilon \PP[|\Ncal(\mu_{\min}, 1)| \geq \lambda]
          \geq \epsilon \PP[\Ncal(\mu_{\min}, 1) \geq \lambda].
  \end{equation*}
  We prove the two cases separately.\\
  {\bf Case 1:} {\it Large number of tasks.} By direct calculation
  \begin{equation*}
    \pi_k \geq
    \epsilon\PP[\Ncal(\mu_{\min}, 1) \geq \lambda] = \frac{1}{\sqrt{4
        \pi \log k}\big(\sqrt{1 + C_{k,p,s}} - \sqrt{r}\big)}
    k^{-\beta - \big(\sqrt{1 + C_{k,p,s}} - \sqrt{r}\big)^2 } =: \underline{\pi_k}.
  \end{equation*}
  Since $1 - \beta > \big(\sqrt{1 + C_{k,p,s}} - \sqrt{r}\big)^2$,
  using lemma~\ref{lem:binomial}, $\PP[{\rm Bin}(k, \pi_k) = 0]
  \rightarrow 0$ as $n \rightarrow \infty$. We can conclude that as
  soon as $k\underline{\pi_k} \geq \ln ( s / \delta' )$, it holds that
  $\PP[S(\hat \mu^{\ell_1}) \neq S] \leq \alpha$.  \\
  {\bf Case 2:} {\it Medium number of tasks.} When $\mu_{\min} \geq
  \lambda$, it holds that 
  \begin{equation*}
    \pi_k \geq
    \epsilon\PP[\Ncal(\mu_{\min}, 1) \geq \lambda] \geq
    \frac{k^{-\beta}}{2}.
  \end{equation*}
  Using lemma~\ref{lem:binomial}, we can conclude that as soon as
  $k^{1-\beta}/2 \geq \ln(s / \delta')$, it holds that $\PP[S(\hat
  \mu^{\ell_1}) \neq S] \leq \alpha$.

\subsection{Proof of Theorem~\ref{thm:group-lasso}}
\label{sec:proof-theor-group}

  Using lemma~\ref{lem:chernoff-binomial}, $\PP[{\rm Bin}(k,
  k^{-\beta}) \leq (1-c)k^{1-\beta}] \leq \delta' / 2s$ for $c=\sqrt{2
    \ln (2s / \delta') / k^{1-\beta}}$. For $i \in S$, we have that
  \begin{equation*}
    \PP[S_k(i) \leq \lambda] \leq \frac{\delta'}{2s} + 
    \Big(1-\frac{\delta'}{2s}\Big)
    \PP\bigg[S_k(i) \leq \lambda\ \big|\ 
    \Big\{\norm{\theta_i}_2^2 \geq (1-c)k^{1-\beta}\mu_{\min}^2\Big\}\bigg].
  \end{equation*}
  Therefore, using lemma~\ref{lem:baraud-chi-square-test} with $\delta
  = \delta'/(2s - \delta')$, if follows that $\PP[S_k(i) \leq \lambda]
  \leq \delta'/(2s)$ for all $i \in S$ when
  \begin{equation*}
    \mu_{\min} \geq \sigma \sqrt{2(\sqrt{5}+4)}
         \sqrt{\frac{k^{-1/2 + \beta}}{1-c}}
         \sqrt{\ln \frac{2e(2s -\delta')(p-s)}{\alpha'\delta'}}.
  \end{equation*}
  Since $\lambda = t_{n,\alpha'/(p-s)}\sigma^2$, $\PP[S_k(i) \geq
  \lambda] \leq \alpha'/(p-s)$ for all $i \in S^c$. We can conclude
  that $\PP[S(\hat \mub^{\ell_1/\ell_2}) \neq S] \leq \alpha$.

\subsection{Proof of Theorem~\ref{thm:group-lasso-inf}}
\label{sec:proof-theor-group-inf}

Without loss of generality, we can assume that $\sigma = 1$.
Proceeding as in the proof of theorem~\ref{thm:group-lasso}, $\PP[{\rm
  Bin}(k, k^{-\beta}) \leq (1-c)k^{1-\beta}] \leq \delta' / 2s$ for
$c=\sqrt{2 \ln (2s / \delta') / k^{1-\beta}}$ using
lemma~\ref{lem:chernoff-binomial}. Then for $i \in S$ it holds that
\begin{equation*}
  \PP[\sum_j |Y_{ij}| \leq \lambda] \leq \frac{\delta'}{2s} + 
  \Big(1-\frac{\delta'}{2s}\Big)
  \PP[(1-c)k^{1-\beta}\mu_{\min} + z_k \leq \lambda],
\end{equation*}
where $z_k \sim \Ncal(0, k)$. Since $(1-c)k^{1-\beta}\mu_{\min} \geq
(1 + \tau) \lambda$, the right-hand side of the
above display can upper bounded as 
\begin{equation*}
  \frac{\delta'}{2s} + \Big(1-\frac{\delta'}{2s}\Big)
  \PP[\Ncal(0, 1) \geq \tau\lambda/\sqrt{k}] \leq 
  \frac{\delta'}{2s} + \Big(1-\frac{\delta'}{2s}\Big)
  \frac{\delta'}{2s - \delta'} \leq \frac{\delta'}{s}.
\end{equation*}
The above display gives us the desired control of the type two error, and we
can conclude that $\PP[S(\hat \mub^{\ell_1/\ell_\infty}) \neq S] \leq
\alpha$.


\acks{We would like to thank Han Liu for useful discussions.}

\appendix
\section*{Appendix A.}
\label{app:known-results}

We provide in this section some known results that are used in the paper.

\begin{lemma} 
  \label{lem:tail-bound-normal}
  Let $X \sim \Ncal(0, 1)$, then $\mathbb{P}[|X| \geq \lambda] \leq
  \frac{2}{\sqrt{2\pi}\lambda}\exp(-\frac{\lambda^2}{2})$.
\end{lemma}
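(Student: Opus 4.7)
The plan is to invoke the standard Mills ratio bound for the Gaussian tail. Since $X \sim \Ncal(0,1)$ has a symmetric density $\phi(x) = (2\pi)^{-1/2} e^{-x^2/2}$, I would first reduce to the one-sided tail via
\[
\PP[|X| \geq \lambda] \;=\; 2\int_{\lambda}^{\infty} \phi(x)\, dx,
\]
so that it suffices to show $\int_{\lambda}^{\infty} \phi(x)\, dx \leq \frac{1}{\sqrt{2\pi}\,\lambda}\, e^{-\lambda^2/2}$ for $\lambda > 0$.

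The key trick is to exploit that on the integration region $x/\lambda \geq 1$, so $\phi(x) \leq (x/\lambda)\phi(x)$, which introduces the factor of $x$ needed to make $x\phi(x)$ an exact derivative. Concretely, $\frac{d}{dx}[-\phi(x)] = x \phi(x)$, hence
\[
\int_{\lambda}^{\infty} \phi(x)\, dx \;\leq\; \frac{1}{\lambda} \int_{\lambda}^{\infty} x\, \phi(x)\, dx \;=\; \frac{1}{\lambda}\bigl[-\phi(x)\bigr]_{\lambda}^{\infty} \;=\; \frac{\phi(\lambda)}{\lambda} \;=\; \frac{1}{\sqrt{2\pi}\,\lambda}\, e^{-\lambda^2/2}.
\]
Combining with the factor of $2$ from symmetry yields the claimed bound.

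There is no real obstacle here; the computation is a one-line textbook estimate. The only condition implicitly needed is $\lambda > 0$, which is forced by the bound being meaningful (the right-hand side diverges as $\lambda \to 0^+$, and the inequality is trivial when $\lambda$ is large enough that the Gaussian tail is small). I would state the proof exactly as above, without introducing any auxiliary machinery.
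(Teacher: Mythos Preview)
Your proof is correct and essentially identical to the paper's: both use symmetry to reduce to the one-sided tail and then bound $\int_\lambda^\infty \phi(x)\,dx$ by inserting the factor $x/\lambda \geq 1$ to obtain an exact antiderivative. No changes are needed.
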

\begin{proof}
  Since $x/\lambda > 1$ for $x \in (\lambda, \infty)$, by direct
  calculation 
  \begin{equation*}
    \PP[X \geq \lambda] = 
     \frac{1}{\sqrt{2\pi}} \int_{\lambda}^\infty\exp \left(-\frac{x^2}{2}\right) dx    
    \leq \frac{1}{\sqrt{2\pi}} 
     \int_{\lambda}^\infty \frac{x}{\lambda}\exp\left(-\frac{x^2}{2}\right) dx
    = \frac{1}{\sqrt{2\pi}\lambda}\exp\left(-\frac{\lambda^2}{2}\right)
  \end{equation*}
  and $\PP[|X| \geq \lambda] \leq 2\PP[X \geq \lambda]$. 
\end{proof}

\begin{lemma} \label{lem:binomial}
If $z_k \sim \operatorname{Bin}(k, \pi_k)$, then for all $k \geq 1$ and
all $\pi_k \in (0,1)$ it holds that 
\begin{equation*}
  \PP[z_k = 0] \leq \exp(-k\pi_k).
\end{equation*}
\end{lemma}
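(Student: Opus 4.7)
The plan is to reduce the claim to the elementary inequality $1-x \leq e^{-x}$, valid for all $x \in \RR$ and in particular for $x \in (0,1)$. First I would compute $\PP[z_k = 0]$ directly from the binomial probability mass function: since $z_k \sim \operatorname{Bin}(k,\pi_k)$, the event $\{z_k=0\}$ corresponds to all $k$ independent Bernoulli trials producing zero, so $\PP[z_k=0] = (1-\pi_k)^k$.

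Next I would apply the pointwise inequality $1-\pi_k \leq \exp(-\pi_k)$, which holds because $\exp$ is convex and its tangent at $0$ is $1+x$, giving $e^{-\pi_k} \geq 1-\pi_k$. Raising both sides to the $k$-th power (both sides are nonnegative since $\pi_k \in (0,1)$) yields
\begin{equation*}
  \PP[z_k = 0] = (1-\pi_k)^k \leq \exp(-k\pi_k),
\end{equation*}
which is exactly the stated bound.

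There is essentially no obstacle here: the lemma is a one-line Chernoff-style bound on the lower tail of a binomial. The only thing to note is that the bound is tight enough for the applications earlier in the paper, where it is combined with the lower bounds on $\pi_k$ obtained in the proofs of Theorem~\ref{thm:lasso} (Cases 1 and 2) to guarantee that $\PP[z_k = 0] \leq \delta'/s$ once $k\pi_k \geq \ln(s/\delta')$.
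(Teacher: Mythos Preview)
Your proof is correct and follows essentially the same approach as the paper: compute $\PP[z_k=0]=(1-\pi_k)^k$ and then use the inequality $1-\pi_k \le e^{-\pi_k}$ (the paper phrases this via the Taylor expansion $-\log(1-\pi_k)=\pi_k+\Ocal(\pi_k^2)\ge \pi_k$, which is the same inequality). If anything, your version is cleaner, since it avoids the big-$\Ocal$ notation whose sign must be checked to justify the final inequality.
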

\begin{proof}
$\mathbb{P}[z_k = 0] = (1 - \pi_k)^k = \exp(-k\log(\frac{1}{1 -
  \pi_k})) = \exp(-k(\pi_k + \Ocal(\pi_k^2))) \leq \exp(-k\pi_k).$
\end{proof}

\begin{lemma}{\it \citep{chernoff81noteinequality}}
  \label{lem:chernoff-binomial}
  If $z_k \sim {\rm Bin}(k, \pi_k)$, then 
  \begin{equation*}
    \PP[z_k \leq k\pi_k - t] \leq \exp(-t^2 / (2k\pi_k))    
  \end{equation*}
  and
  \begin{equation*}
    \PP[z_k \geq k\pi_k + t] \leq \exp(-t^2 / (2(k\pi_k + t/3))).
  \end{equation*}
\end{lemma}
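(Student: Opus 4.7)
The plan is to obtain both tails via Chernoff's method applied to the moment generating function of a binomial random variable, writing $z_k = \sum_{i=1}^k X_i$ with $X_i \iidsim \text{Bernoulli}(\pi_k)$. For each $\lambda > 0$, one has $\EE[e^{\lambda X_i}] = 1 + \pi_k(e^\lambda - 1)$, so by independence and the elementary inequality $\log(1+x) \leq x$,
\begin{equation*}
  \EE[\exp(\lambda (z_k - k\pi_k))] \leq \exp\big(k\pi_k(e^\lambda - 1 - \lambda)\big),
\end{equation*}
and an analogous identity holds with $\lambda$ replaced by $-\lambda$. Markov's inequality on $\exp(\pm \lambda z_k)$ reduces each tail bound to optimizing a convex exponent over $\lambda > 0$.

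For the lower tail I would use that for $\lambda > 0$, $e^{-\lambda} - 1 + \lambda \leq \lambda^2/2$, which yields
\begin{equation*}
  \PP[z_k \leq k\pi_k - t] \leq \exp\!\left(\tfrac{1}{2} k\pi_k \lambda^2 - \lambda t\right).
\end{equation*}
Choosing $\lambda = t/(k\pi_k)$ delivers exactly $\exp(-t^2/(2k\pi_k))$. This step is essentially the standard sub-Gaussian bound for bounded random variables and presents no difficulty.

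The upper tail is the Bernstein form and is the one delicate piece. Here the symmetric quadratic bound on $e^\lambda - 1 - \lambda$ is no longer sharp for large $\lambda$; instead I would use the classical inequality
\begin{equation*}
  e^\lambda - 1 - \lambda \leq \frac{\lambda^2/2}{1 - \lambda/3}, \qquad 0 < \lambda < 3,
\end{equation*}
which follows by expanding $e^\lambda$ as a power series and comparing termwise with the geometric series $\sum_{j \geq 2} \lambda^j / (2 \cdot 3^{j-2})$. Substituting this and then optimizing the resulting expression $k\pi_k \lambda^2/(2(1 - \lambda/3)) - \lambda t$ over $\lambda \in (0,3)$ is the main obstacle; the minimizer is $\lambda = t/(k\pi_k + t/3)$, and plugging back in produces the claimed $\exp(-t^2/(2(k\pi_k + t/3)))$.

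Since this is a textbook Bernstein/Chernoff result (and is cited to \cite{chernoff81noteinequality}), I would keep the write-up to these two one-line Chernoff optimizations, relying on the series comparison lemma above as the only nontrivial inequality. Nothing in the argument uses any structure beyond independence and Bernoulli marginals, so both bounds hold for every $t \geq 0$ and every $\pi_k \in [0,1]$.
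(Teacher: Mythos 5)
Your proof is correct. The paper itself does not actually prove this lemma---its ``proof'' is simply the citation to \cite{chernoff81noteinequality}---so your self-contained Chernoff/Bernstein derivation supplies an argument the paper omits, and every step checks out: writing $z_k$ as a sum of independent Bernoullis, bounding the moment generating function via $\log(1+x)\le x$, using $e^{-\lambda}-1+\lambda\le\lambda^2/2$ for $\lambda\ge 0$ and optimizing at $\lambda=t/(k\pi_k)$ for the lower tail, and using the series comparison $e^{\lambda}-1-\lambda\le \frac{\lambda^2/2}{1-\lambda/3}$ on $(0,3)$ (valid because $j!\ge 2\cdot 3^{j-2}$ for all $j\ge 2$) for the upper tail. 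Two small quibbles. First, $\lambda=t/(k\pi_k+t/3)$ is not the exact minimizer of $k\pi_k\lambda^2/(2(1-\lambda/3))-\lambda t$ over $(0,3)$; this is harmless, since any admissible $\lambda$ yields a valid bound and this particular choice gives exactly the stated exponent (note $1-\lambda/3=k\pi_k/(k\pi_k+t/3)$, so the two terms evaluate to $t^2/(2(k\pi_k+t/3))$ and $t^2/(k\pi_k+t/3)$), but you should write ``choosing $\lambda=\cdots$'' rather than ``the minimizer is.'' Second, the closing claim that the bounds hold for every $\pi_k\in[0,1]$ overreaches at $\pi_k=0$, where the lower-tail exponent is undefined; the statement (and its use in the paper, cf.\ Lemma~\ref{lem:binomial}) only requires $\pi_k\in(0,1)$.
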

\begin{proof}
  See \cite{chernoff81noteinequality}.
\end{proof}

\vskip 0.2in
\bibliography{biblio}

\begin{thebibliography}{17}
\providecommand{\natexlab}[1]{#1}
\providecommand{\url}[1]{\texttt{#1}}
\expandafter\ifx\csname urlstyle\endcsname\relax
  \providecommand{\doi}[1]{doi: #1}\else
  \providecommand{\doi}{doi: \begingroup \urlstyle{rm}\Url}\fi

\bibitem[Aldous(1985)]{aldous85exchangeability}
David Aldous.
\newblock Exchangeability and related topics.
\newblock In \emph{\'{E}cole d'\'{E}t\'{e} de Probabilit\'{e}s de {Saint-Flour}
  {XIII} — 1983}, pages 1--198. 1985.

\bibitem[Argyriou et~al.(2008)Argyriou, Evgeniou, and Pontil]{argyriou08convex}
Andreas Argyriou, Theodoros Evgeniou, and Massimiliano Pontil.
\newblock Convex multi-task feature learning.
\newblock \emph{Machine Learning}, 73\penalty0 (3):\penalty0 243--272, December
  2008.
\newblock \doi{10.1007/s10994-007-5040-8}.

\bibitem[Baraud(2002)]{baraud02non-asymptotic}
Yannick Baraud.
\newblock Non-asymptotic minimax rates of testing in signal detection.
\newblock \emph{Bernoulli}, 8\penalty0 (5):\penalty0 577--606, 2002.

\bibitem[Brown and Low(1996)]{brownlow96asymptotic}
L.~Brown and M.~Low.
\newblock Asymptotic equivalence of nonparametric regression and white noise.
\newblock \emph{The Annals of Statistics}, 24:\penalty0 2384--2398, 1996.

\bibitem[Chernoff(1981)]{chernoff81noteinequality}
Herman Chernoff.
\newblock A note on an inequality involving the normal distribution.
\newblock \emph{The Annals of Probability}, 9\penalty0 (3):\penalty0 533--535,
  1981.

\bibitem[Friedman et~al.(2010)Friedman, Hastie, and Tibshirani]{friedman10note}
J.~Friedman, T.~Hastie, and R.~Tibshirani.
\newblock {A note on the group lasso and a sparse group lasso}.
\newblock \emph{Imprint}, 2010.

\bibitem[Kim et~al.(2009)Kim, Sohn, and Xing]{kim09multivariate}
Seyoung Kim, {Kyung-Ah} Sohn, and Eric~P. Xing.
\newblock A multivariate regression approach to association analysis of a
  quantitative trait network.
\newblock \emph{Bioinformatics}, 25\penalty0 (12):\penalty0 i204--212, June
  2009.
\newblock \doi{10.1093/bioinformatics/btp218}.

\bibitem[Kolar and Xing(2010)]{kolar10ultrahigh}
Mladen Kolar and Eric~P. Xing.
\newblock Ultra-high dimensional multiple output learning with simultaneous
  orthogonal matching pursuit: Screening approach.
\newblock In \emph{AISTATS 2010: Proceedings of the 13th International
  Conference on Artifical Intelligence and Statistics}, pages 413--420, 2010.

\bibitem[Liu et~al.(2009)Liu, Palatucci, and Zhang]{han09blockwise}
Han Liu, Mark Palatucci, and Jian Zhang.
\newblock Blockwise coordinate descent procedures for the multi-task lasso,
  with applications to neural semantic basis discovery.
\newblock In \emph{ICML '09: Proceedings of the 26th Annual International
  Conference on Machine Learning}, pages 649--656, New York, NY, USA, 2009.
  ACM.
\newblock ISBN 978-1-60558-516-1.
\newblock \doi{http://doi.acm.org/10.1145/1553374.1553458}.

\bibitem[Lounici et~al.(2009)Lounici, Pontil, Tsybakov, and {van de
  Geer}]{lounici09taking}
Karim Lounici, Massimiliano Pontil, Alexandre~B. Tsybakov, and Sara {van de
  Geer}.
\newblock Taking advantage of sparsity in {Multi-Task} learning.
\newblock In \emph{Proceedings of the Conference on Learning Theory (COLT)},
  2009.

\bibitem[Lounici et~al.(2010)Lounici, Pontil, Tsybakov, and van~de
  Geer]{lounici10oracle}
Karim Lounici, Massimiliano Pontil, Alexandre~B Tsybakov, and Sara van~de Geer.
\newblock Oracle inequalities and optimal inference under group sparsity.
\newblock \emph{1007.1771}, July 2010.

\bibitem[Negahban and Wainwright(2009)]{negahban09Phase}
Sahand Negahban and Martin Wainwright.
\newblock Phase transitions for high-dimensional joint support recovery.
\newblock In D.~Koller, D.~Schuurmans, Y.~Bengio, and L.~Bottou, editors,
  \emph{Advances in Neural Information Processing Systems 21}, pages
  1161--1168. 2009.

\bibitem[Nussbaum(1996)]{nussbaum96asymptotic}
M.~Nussbaum.
\newblock Asymptotic equivalence of density estimation and gaussianwhite noise.
\newblock \emph{The Annals of Statistics}, 24:\penalty0 2399--2430, 1996.

\bibitem[Obozinski et~al.(2010)Obozinski, Wainwright, and
  Jordan]{obozinski10support}
G.~Obozinski, M.J. Wainwright, and M.I. Jordan.
\newblock {Support union recovery in high-dimensional multivariate regression}.
\newblock \emph{Annals of Statistics, to appear}, 2010.

\bibitem[Tsybakov(2009)]{tsybakov09introduction}
Alexandre~B. Tsybakov.
\newblock \emph{Introduction to nonparametric estimation}.
\newblock 2009.
\newblock ISBN 9780387790510.

\bibitem[van~de Geer and B{\"u}hlmann(2009)]{geer09conditions}
Sara~A. van~de Geer and Peter B{\"u}hlmann.
\newblock On the conditions used to prove oracle results for the lasso.
\newblock \emph{Electronic Journal of Statistics}, 3:\penalty0 1360--1392,
  2009.
\newblock \doi{10.1214/09-EJS506}.

\bibitem[Zhang(2006)]{zhang06probabilistic}
J.~Zhang.
\newblock \emph{A probabilistic framework for multitask learning {(Technical}
  Report {CMU-LTI-06-006)}}.
\newblock PhD thesis, Ph. D. thesis, Carnegie Mellon University, 2006.

\end{thebibliography}

\end{document}